\newif\ifarxiv
\arxivtrue   

\documentclass{article}
\ifarxiv
  \usepackage[preprint]{neurips_2026}
\else
  \usepackage{neurips_2026}
\fi

\usepackage[utf8]{inputenc}
\usepackage[T1]{fontenc}
\usepackage{microtype}
\usepackage{hyperref}
\usepackage{url}
\usepackage{booktabs}
\usepackage{amsmath,amsfonts,amssymb}
\usepackage{amsthm}
\usepackage{graphicx}
\usepackage{xcolor}
\usepackage{enumitem}

\newtheorem{theorem}{Theorem}
\newtheorem{corollary}[theorem]{Corollary}

\theoremstyle{definition}
\newtheorem{definition}[theorem]{Definition}
\theoremstyle{remark}

\hypersetup{draft}

\newif\ifshowtodos

\newif\ifcutversion
\cutversiontrue   

\long\def\cutblock#1\endcutblock{\ifcutversion\else#1\fi}

\newcommand{\ebone}{eb1}
\newcommand{\ebonepro}{eb1-pro}
\newcommand{\ebonepreview}{eb1-preview}
\newcommand{\ebonedelta}{eb1-delta-preview}
\newcommand{\ebonefrontier}{eb1-frontier-preview}
\newcommand{\gptoss}{GPT-OSS-20B}
\newcommand{\gptmini}{GPT-5-mini}
\newcommand{\gptfive}{GPT-5.4}
\newcommand{\opus}{Claude Opus~4.6}
\newcommand{\kimi}{Kimi-K2.5}
\newcommand{\gemini}{Gemini-2.5-Flash}
\newcommand{\qwen}{Qwen-3.5-9B}

\newcommand{\bon}{BoN}
\newcommand{\ada}{AdaEvolve}
\newcommand{\evox}{EvoX}

\newcommand{\A}{\mathcal{A}}

\newcommand{\gamble}{GAMBLe}

\title{Don't gamble, \gamble: An Analytical Framework for AI-Driven Research Systems}

\ifarxiv
\author{%
  Marquita Ellis\thanks{Corresponding author.} \\
  IBM Research\\
  \texttt{mme@berkeley.edu} \\
  \And
  Paul Castro \\
  IBM Research\\
}
\fi

\begin{document}

\maketitle


\begin{abstract}
AI-Driven Research Systems (ADRS)---systems coupling LLMs with automated evaluation to discover algorithms, proofs, and designs---are being optimized and adopted across domains, but the tools to analyze them have not kept pace. ADRS performance depends on component interactions that are poorly understood, expensive to explore, and (as we show) not well captured by standard convergence guarantees. These guarantees rely on structural assumptions that do not hold under the ADRS process we formalize.
We introduce \gamble{}, a framework that decomposes ADRS behavior into four parameters (generator $G$, assessor $\A$, discovery mechanism $\mathcal{M}$, budget $B$) and one compositional object, the \emph{effective landscape} $L_{\text{eff}} = \A \circ G$, which reveals that distinct generator--assessor pairs induce structurally different per-problem optimization landscapes.
We exercise the framework on $760{+}$ replicated runs ($>$46{,}000 iterations) spanning generators from single LLMs to dynamically-adaptive ensembles, mechanisms from greedy selection to co-evolutionary meta-search, and three NP-hard problems whose assessors range from continuous scoring to cliff functions.
The experiments reveal no total ordering of generators or mechanisms: frontier models can underperform open-source alternatives and the simplest mechanism sometimes outperforms state-of-the-art meta-search. Results show that even under limited budgets ($60$ iterations per run), the right component choices can improve performance by $13$--$67\%$ and search efficiency by $6$--$39\times$.
\end{abstract}

\section{Introduction}
\label{sec:intro}

Recent systems for automated scientific and algorithmic discovery (FunSearch \citep{Romera-Paredes2024}, AlphaEvolve \citep{AlphaEvolve2025}, LEVI \citep{tanveer2026levi}, and others) share a common architecture: a language model generates candidate solutions, a scoring function evaluates them, and a search algorithm directs the process by selecting parents, constructing prompts, and adapting strategy over time. We refer to these as \textbf{AI-Driven Research Systems} (ADRS), following \citet{cheng2025barbariansgateaiupending}. These systems have produced new mathematical lower bounds on the cap set problem, improved matrix multiplication algorithms, and competitive solutions to open optimization challenges. The architecture can work; the question is \emph{when and why} it works well across problems.

We identify the minimal decomposition needed to analyze ADRS behavior: the \textbf{generator} $G$ (the entire candidate-producing system), the \textbf{assessor} $\A$ (the evaluation system), and the \textbf{discovery mechanism} $\mathcal{M}$ (the search algorithm and its configuration), operating under a computational budget $B$ to explore a given problem landscape $L$. Recent engineering advances have expanded each component's design space: $G$ from single models to agentic multi-model systems \citep{hamadanian2026glia, qu2026bilevel}, $\A$ from scalar scoring to rich feedback \citep{agrawal2025gepa, cheng2026adrsdatabases}, and $\mathcal{M}$ from greedy to co-evolutionary meta-search \citep{liu2026evox, cemri2026adaevolve}. Exhaustive search over the resulting configuration space is quickly becoming intractable: each iteration consumes LLM inference (or, for compound generators, multiple coordinated calls), runs span tens to hundreds of iterations, and building confidence in near-optimality requires replication across runs (as we show), multiplying costs further. Without principled characterization, the compute, energy, token, and expert time costs of navigating this space grow with the complexity of the system.

ADRS lack theoretical foundations for efficient application and optimization. At least four sources of variability make it difficult to identify which component is limiting a given system: generator sensitivity, $G \times \mathcal{M}$ interaction, configuration sensitivity, and run-to-run variance. We formalize their origins (Section~\ref{sec:framework}) and expose each empirically (Section~\ref{sec:empirical}).
\cutblock
\begin{enumerate}
\item \textbf{Generator sensitivity.} Performance depends heavily on which model serves as the generator, in ways that do not track capability benchmarks or model scale. In our experiments, \opus{}---among the highest-ranked models on general benchmarks---scored $21.5$ while the smaller \gptmini{} scored $45.8$ on the same problem and mechanism.
\item \textbf{Component interaction.} The same mechanism can dramatically help one generator but not another, and the assessor's signal structure determines whether any mechanism can make progress at all. In our experiments, switching mechanism improved one model's score by $4\times$ while changing another's by less than $6\%$; the simplest mechanism (Best-of-N) outperformed both adaptive alternatives on some generators; and on one problem, no generator--mechanism combination produces any improvement because the assessor provides no gradient.
\item \textbf{Configuration sensitivity.} The same generator under different configurations (mechanism choice, prompt structure, hyperparameters) spans the full score range---in one case from $0$ to $77.4$---yet no ordering is preserved across generators or problems.
\item \textbf{Run-to-run variance.} Identical configurations produce different scores across runs, with coefficients of variation around $15\%$---large enough that single runs cannot reliably rank configurations.
\end{enumerate}
\endcutblock

\paragraph{Contributions.} We show that standard analytical tools' assumptions are often violated in ADRS, and introduce the minimal machinery necessary to begin bridging the gap:
\begin{enumerate}[leftmargin=*]
\item We formalize the ADRS process and prove that the best-score process $\{s^*_t\}$ is \textbf{not Markov} (Theorem~\ref{thm:nonmarkov}).
The full state (run history and mechanism state) \emph{is} Markov but evolves in a \textbf{growing-dimensional} space (the history gains an entry every step). While this process can be embedded in a fixed infinite-dimensional state space, many standard quantitative convergence analyses rely on structural assumptions---fixed representations, stationary objectives, stable transition operators---that are often violated in ADRS due to history-dependent context construction and adaptive mechanisms. Moreover, natural scalar summaries such as $s^*_t$ are not, in general, sufficient statistics for the process, so convergence behavior is not fully determined by low-dimensional progress metrics alone. Under history-preserving context construction, initial conditions propagate: runs starting from different seeds may follow persistently different trajectories even at the same $s^*_t$ (Appendix~\ref{app:persistence}).
\item We define the \textbf{effective landscape} $L_{\text{eff}} = \A \circ G$ and show that different generators can induce structurally different landscapes on the same problem (Theorem~\ref{thm:leff}); generator sensitivity in our data reflects structural differences rather than purely sampling noise. Ensemble generators can \textbf{escape barriers} that trap any single generator.
\item We define the \textbf{generator ceiling} $s^*_\infty(G, \A)$ and \textbf{system ceiling} $s^*_\infty(G, \A, \mathcal{M})$ and derive a \textbf{regime classification} ($G$-limited, $\A$-limited, $\mathcal{M}$-limited, saturated) that identifies the binding constraint with targeted evaluations rather than exhaustive ablation.
\item We validate the framework empirically across $760{+}$ replicated runs totaling ${>}46{,}000$ iterations, spanning 12~generators from 6~model families (static through dynamically adaptive), 3~search mechanisms, and 3~NP-hard problems whose assessors range from rich continuous scoring to cliff functions. The data reveals basin structure, $G \times \mathcal{M}$ interaction, and regime diversity including $\A$-limited configurations where no generator or mechanism can make progress (Figure~\ref{fig:p0-basins}; Section~\ref{sec:empirical}).
\end{enumerate}

\paragraph{Related work.}
$L_{\text{eff}}$ extends fitness landscape theory \citep{kauffman1993origins} to ADRS, where the generating operator is context-dependent, stochastic, and potentially adaptive (Appendix~\ref{app:ancestors}). Unlike AutoML \citep{feurer2015autosklearn}, our framework addresses non-additive component interactions. Our $(G, \A, \mathcal{M})$ decomposition is analytical rather than architectural; we map it to \citeauthor{cheng2025barbariansgateaiupending}'s five-component description in Appendix~\ref{app:cheng-mapping}. Several concurrent groups have observed generator sensitivity, $G \times \mathcal{M}$ interaction, and run-to-run variance empirically \citep{lehman2022evolutionlargemodels, cheng2025letbarbariansin, skydiscover2026, liu2026evox, cemri2026adaevolve, tanveer2026levi, agrawal2025gepa, cheng2026adrsdatabases, hamadanian2026glia, karimi2026engram, qu2026bilevel}; none provides a theoretical account of why these phenomena arise or how to diagnose the limiting factor with targeted evaluations rather than exhaustive ablation (Appendix~\ref{app:concurrent}).

\section{The \gamble{} framework}
\label{sec:framework}

We define the \gamble{} framework, beginning with a process model of how $G$, $\A$, and $\mathcal{M}$ interact within a run, then prove two structural results: non-Markov best-score dynamics, and generator-dependent effective landscapes. The \textbf{generator} $G$ produces candidates given context; the \textbf{assessor} maps candidates to scores, $\A: \mathcal{X} \to \mathbb{R}$, where $\mathcal{X}$ is the candidate space defined by the problem \textbf{landscape} $L$; and the \textbf{discovery mechanism} $\mathcal{M}$ directs exploration, encompassing parent selection, prompt construction, parameter adaptation, and potentially meta-level strategy evolution. Crucially, $\A$'s signal reaches $G$ only through $\mathcal{M}$'s context construction: $\mathcal{M}$ selects which scores, candidates, and history to surface, so what $G$ ``sees'' of the landscape depends on both $\A$'s fidelity and $\mathcal{M}$'s filtering. When $\A$ cannot distinguish candidates, no $\mathcal{M}$ can provide useful signal---$\A$ can be a binding constraint.
We formalize $\A$ as scalar-valued; structured feedback \citep{agrawal2025gepa} enters through context construction $C$ rather than extending $\A$'s codomain, so the theorems apply to $\{s^*_t\}$ regardless of feedback richness (feedback affects $L_{\text{eff}}$ geometry via $C$'s effectiveness). Notation is summarized in Appendix~\ref{app:notation}; all assumptions are stated formally in Appendix~\ref{app:assumptions}; alternative formulations considered are in Appendix~\ref{app:alternatives}.

Every ADRS produces a \textbf{run history} $D_t$: the append-only record of all data produced through step $t$. $\mathcal{M}$ accesses the history through context construction $C$, which determines what subset of $D_t$ the generator sees. Without history-dependent context construction, the process degenerates to i.i.d.\ sampling and gains come only from drawing more samples, not from learning to generate better candidates.

\begin{definition}[ADRS trajectory]
\label{def:trajectory}
An ADRS run produces states $(D_0, \mathcal{M}_0) \to \cdots \to (D_T, \mathcal{M}_T)$, where $D_t = \{(x_i, s_i)\}_{i=1}^{t}$ is the run history (projected to candidate-score pairs) and $\mathcal{M}_t$ is the mechanism state at step $t$. \textbf{At each step}, the system \textbf{(1)}~constructs context $c_t = C(D_t, \mathcal{M}_t)$, \textbf{(2)}~generates a candidate $x_{t+1} \sim G(\cdot \mid c_t)$, \textbf{(3)}~evaluates $s_{t+1} = \A(x_{t+1})$, and \textbf{(4)}~records the result, $D_{t+1} = D_t \cup \{(x_{t+1}, s_{t+1})\}$, and updates mechanism state $\mathcal{M}_{t+1} = U(D_t, \mathcal{M}_t, x_{t+1}, s_{t+1})$. The best-score process is $s^*_t = \max_{i \leq t} s_i$.
\end{definition}

\subsection{The best-score process is not Markov}
\label{sec:nonmarkov}

\begin{theorem}[Non-reducibility]
\label{thm:nonmarkov}
The best-score process $\{s^*_t\}$ is \textbf{not Markov} for an ADRS satisfying:
\begin{enumerate}[leftmargin=2em, label=(A\arabic*)]
\item Faithful context construction: $\exists\, D_t \neq D_t'$ with the same best score ($s^*_t = \max_{i \leq t} s_i$) such that $C(D_t, \mathcal{M}_t) \neq C(D_t', \mathcal{M}_t)$,
\item Context-dependent generation: $c \neq c' \Rightarrow G(\cdot \mid c) \neq G(\cdot \mid c')$ for $c, c'$ in the range of $C$,
\item Upper-tail-separating assessor: for any threshold $w$ (in particular $w = s^*_t$) and distinct $G(\cdot \mid c) \neq G(\cdot \mid c')$ that both admit improvement above $w$, the distributions of $\max(w,\, \A(x))$ are not identical,
\end{enumerate}
\end{theorem}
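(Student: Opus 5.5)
The plan is to exhibit a violation of the Markov property directly. We will find two histories that share the same best-score path up to step $t$ but lead to different conditional laws of $s^*_{t+1}$. If $\{s^*_t\}$ were Markov, the law of $s^*_{t+1}$ given the past would depend only on $s^*_t$; we will contradict this. We fix the mechanism state $\mathcal{M}_t$ common to both branches, as allowed by the quantifier in (A1). We then take histories $D_t \neq D_t'$ with equal best score $w := s^*_t = s'^*_t$ and $c_t = C(D_t,\mathcal{M}_t) \neq c_t' = C(D_t',\mathcal{M}_t)$, which (A1) guarantees.

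The chain of implications runs through the step structure of Definition~\ref{def:trajectory}. By (A2), $c_t \neq c_t'$ gives $G(\cdot\mid c_t) \neq G(\cdot\mid c_t')$. Since $s^*_{t+1} = \max(w, \A(x_{t+1}))$ with $x_{t+1}\sim G(\cdot\mid c_t)$, the conditional law of $s^*_{t+1}$ given $D_t$ is exactly the law of $\max(w,\A(x))$ under $G(\cdot\mid c_t)$. By (A3) applied at threshold $w$, these two laws differ between the branches, provided both generators admit improvement above $w$. To meet this proviso, we choose the witnessing pair in (A1) so that both contexts lie in the regime where improvement is possible. Equivalently, we read (A1)--(A3) jointly as asserting that such a pair exists; this reading should be stated explicitly.

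The main obstacle is moving from "different conditional laws given the full history" to "not Markov with respect to the natural filtration of $s^*$." Markovianity of $s^*$ concerns conditioning on $\sigma(s^*_0,\dots,s^*_t)$, not on $D_t$. We would try two routes. The first, and cleanest, conditions on two events of positive probability: $E=\{D_t \in \mathcal{D}\}$ and $E'=\{D_t\in\mathcal{D}'\}$, where $\mathcal{D}$ and $\mathcal{D}'$ are small neighborhoods (or atoms, in a discrete model) around $D_t$ and $D_t'$. Both events should be chosen to have the same best-score path $(s^*_0,\dots,s^*_t)$, which can be arranged by letting the two histories differ only in non-record entries. We then show $P(s^*_{t+1}\in\cdot\mid E)\neq P(s^*_{t+1}\in\cdot\mid E')$. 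A Markov $s^*$ would force both to equal the kernel evaluated at $w$, which is a contradiction.

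The second route covers the case where the full best-score path, rather than just $s^*_t$, is the conditioning. Here the tower property shows that the law of $s^*_{t+1}$ given the $s^*$-path is a mixture over histories. We then exhibit two $s^*$-paths that end at the same $w$ but put different weights on $D_t$ versus $D_t'$, for example because one path has an earlier plateau. The resulting mixtures differ, which gives the second-order dependence. If measure-theoretic regularity becomes an issue for continuous candidate spaces, we will restrict to a discrete (finite-support) instance, since a single counterexample system suffices to establish the theorem.
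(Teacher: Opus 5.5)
You correctly identified the real obstacle, but neither route gets past it, and under (A1)--(A3) alone it cannot be got past. Your first step (A1 $\Rightarrow$ A2 $\Rightarrow$ A3, giving different laws of $s^*_{t+1}$ given $D_t$ versus $D_t'$) is exactly the paper's ``state dependence'' paragraph. Your explicit proviso that both witnessing contexts admit improvement is genuinely needed; the paper's $w<\sup\A(\mathcal{X})$ does not ensure it.

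\emph{Route~1.} The events $E$ and $E'$ are sub-events of $\{(s^*_0,\dots,s^*_t)=(a_0,\dots,a_t)\}$, not members of $\sigma(s^*_0,\dots,s^*_t)$. The Markov property of $s^*$ in its own filtration constrains only conditioning on that whole event. So $P(s^*_{t+1}\in\cdot\mid E)\neq P(s^*_{t+1}\in\cdot\mid E')$ contradicts nothing. It shows only that $s^*$ is not Markov relative to the larger filtration of $(D_t,\mathcal{M}_t)$, i.e.\ that $s^*_t$ is not sufficient. That is the very distinction you drew one sentence earlier.

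\emph{Route~2.} This is the paper's own second step: paths $(s^*_0,w,w)$ versus $(s^*_0,s^*_0,w)$, with ``prior history $s^*_1$ is informative.'' You make it more honest by acknowledging the mixture over histories. But the key claim, that two $s^*$-paths ending at $w$ weight the context-relevant histories differently, is asserted rather than derived, and nothing in (A1)--(A3) supplies it. Moreover, with more than two histories per path, different weights need not even give different mixtures.

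In fact this claim is false in general under (A1)--(A3). Take candidates $x=(\sigma,b)\in\mathbb{Z}\times\{0,1\}$ with $\A(x)=\sigma$, a fixed $D_0$, and a stateless mechanism with $C(D_t)=(s^*_t,b_t)$, where $b_t$ is the tag of the latest candidate. Let $G(\cdot\mid(w,b))$ output $\sigma=w+1$ with probability $p_b$ and $\sigma=w-1$ otherwise, where $0<p_0<p_1<1$, together with a fresh fair tag drawn independently of $\sigma$.
\begin{itemize}
\item (A1) holds: histories differing only in the last tag share $s^*_t$ but have different contexts.
\item (A2) holds: distinct contexts give distinct laws.
\item (A3) holds: for any threshold $v$ below both $w+1$ and $w'+1$, the laws of $\max(v,\A(x))$ differ either in their top atom (when $w\neq w'$) or in its mass (when $p_b\neq p_{b'}$).
\end{itemize}
Yet $s^*_{t+1}-s^*_t$ equals $1$ with probability $p_{b_t}$ and $0$ otherwise. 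The tags driving these increments are independent of each other and of all past scores. So the increments are independent and $\{s^*_t\}$ is Markov.

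What is missing is an extra hypothesis tying the $s^*$-path to the context-relevant content of $D_t$. For example: two positive-probability $s^*$-paths ending at the same $w$ whose conditional laws of $c_t$ induce different laws of $\max(w,\A(x))$. Absent that, the provable statement is non-reducibility relative to the full state, which your Route~1 does establish.

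Finally, ``a single counterexample system suffices'' misreads the quantifier. The theorem is asserted for every ADRS satisfying (A1)--(A3), so exhibiting one non-Markov instance proves a much weaker claim.
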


\begin{proof}
\emph{State dependence.} Take $(D_t, \mathcal{M}_t)$, $(D_t', \mathcal{M}_t)$ with $\max D_t = \max D_t' = w < \sup \A(\mathcal{X})$ but $D_t \neq D_t'$. Faithfulness (A1) gives $C(D_t, \mathcal{M}_t) \neq C(D_t', \mathcal{M}_t)$; context-dependence (A2) gives distinct generation distributions $G(\cdot \mid c_t) \neq G(\cdot \mid c_t')$; upper-tail separation (A3) then gives distinct distributions of $s^*_{t+1} = \max(w, s_{t+1})$, so $s^*_t$ alone does not determine the distribution of $s^*_{t+1}$.

\emph{Non-Markov.} Two realizations can reach $s^*_2 = w > s^*_0$ via different paths: improving at step~1 (history $(s^*_0, w, w)$) or at step~2 (history $(s^*_0, s^*_0, w)$). Their histories $D_2$ differ. By state dependence, these produce distinct distributions of $s^*_3$. Both conditionals share $s^*_2 = w$ but prior history $s^*_1$ is informative, so $\{s^*_t\}$ is not Markov.
\end{proof}

A1--A3 hold by construction for any ADRS that shows prior candidates to the generator (A1), uses a context-sensitive generator (A2), and has a non-degenerate assessor (A3); see Appendix~\ref{app:assumptions}.
In practice, the run history may include structured assessor feedback \citep{agrawal2025gepa}, generation reasoning traces, and experimental logs \citep{karimi2026engram, hamadanian2026glia}. Richer histories make A1 easier to satisfy---more ways for $D_t$ to differ---so systems that persist and leverage such data strengthen the non-Markov property. Moreover, under history-preserving context construction, initial conditions can persist: when $C(D_t, \mathcal{M}_t)$ carries information from $D_0$, the generation distribution remains $D_0$-dependent for all $t$ (Appendix~\ref{app:persistence}).

\textbf{Consequence.} The full state $(D_t, \mathcal{M}_t)$ \emph{is} Markov---but it evolves in a \emph{growing-dimensional} space: the history gains an entry every step. Although the process can be embedded in a fixed infinite-dimensional state space, many existing rate guarantees in optimization, bandit theory, and evolutionary algorithms typically assume fixed representations, stationary objectives, or stable transition operators---assumptions often violated in ADRS through history-dependent context construction and adaptive mechanisms (Appendix~\ref{app:standard-tools}). Scalar progress metrics such as $s^*_t$ do not determine future behavior, so analysis requires tracking the full history. When the effective landscape contains regions with different improvement probabilities, both reachable from the initial history, independent replications can produce multimodal final-score distributions. Because $\{s^*_t\}$ is not Markov, the score alone does not determine which region a run is exploring---two runs at the same $s^*_t$ but with different histories can have different improvement probabilities (Theorem~\ref{thm:nonmarkov}). Neither proximity to the ceiling, nor the binding constraint, nor the structure of final-score distributions can be determined from a single trajectory---all require independent replications.

\paragraph{Relation to inference-time scaling laws.}
The non-Markov property is what separates ADRS from classical \emph{inference-time scaling}, where a fixed generator is sampled repeatedly and a best, majority-voted, or verifier-selected candidate is returned \citep{brown2024monkeys, chen2024morecalls}. That regime is the non-adaptive limit of our process, in which context construction is history-independent: the draws are i.i.d., and gains come from per-sample compute (e.g., longer chains of thought, which raise the per-sample success probability) or from drawing more samples, not from conditioning generation on the content of the run history (A1). This i.i.d.\ limit reduces to the familiar best-of-$N$ coverage law (Corollary~\ref{cor:its}, Appendix~\ref{app:its}).

\noindent Departing from this i.i.d.\ regime is graded: if $C$ varies with the current best score alone, the per-sample success probability becomes state-dependent yet $\{s^*_t\}$ remains Markov (an inhomogeneous coverage process); dependence on history \emph{content} (A1) is what makes the improvement probability history-dependent and the process non-Markov (Theorem~\ref{thm:nonmarkov}). The coverage ceiling of the i.i.d.\ regime (the highest score with positive per-sample probability, attained as the number of samples grows) is the generator ceiling $s^*_\infty(G, \A)$ restricted to the single context $c_0$; adaptive mechanisms exist to construct better contexts and exceed it. With an automatic verifier, coverage converts directly into performance \citep{brown2024monkeys}; without one, selecting the best candidate is a separate \emph{identifiability} problem (the coverage--identifiability separation of \citealp{sunkaraneni2026boosting}), governed by the assessor $\A$.

\subsection{The effective landscape}
\label{sec:leff}

The system does not explore $L$ directly---it explores $L$ through the lens of what $G$ can generate. This gives the non-Markov result geometric content: generator sensitivity is not noise but reflects structurally different optimization landscapes, and which region of $L_{\text{eff}}$ a run explores is unobservable from $s^*_t$ alone. To formalize this:

\begin{definition}[Effective landscape]
\label{def:leff}
The \textbf{effective landscape} is $L_{\text{eff}}(G, \A) = \A_* \circ G$, where $G$ is the full family of conditional distributions $G(\cdot \mid c)$ over candidates, indexed by context $c$. $L_{\text{eff}}$ maps each context $c$ to the pushforward distribution $\A_* G(\cdot \mid c)$ over scores.
\end{definition}

When $G$ is static during a run---the common case, covering single models, static ensembles, and $\mathcal{M}$-controlled routing---$L_{\text{eff}}$ is time-invariant: a fixed surface the system navigates as $c_t = C(D_t, \mathcal{M}_t)$ evolves. When $G$ itself adapts (Section~\ref{sec:adaptive-g}), the landscape becomes time-varying: $L_{\text{eff},t} = \A_* \circ G_t$. All structural results below hold in both cases; for adaptive $G$, they apply at each instant $t$.

\begin{theorem}[Generator-dependent effective landscape]
\label{thm:leff}
Let $\A$ be a non-constant assessor (A4).
Two generators $G_1 \neq G_2$ can induce different effective landscapes: $L_{\text{eff}}(G_1, \A) \neq L_{\text{eff}}(G_2, \A)$.
\end{theorem}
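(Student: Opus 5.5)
The statement is existential ("can induce"), so I will prove it by explicit construction: exhibit one assessor-compatible pair $G_1 \neq G_2$ whose pushforwards differ at some context. The only hypothesis I will use is (A4), that $\A$ is non-constant. This gives candidates $x_a, x_b \in \mathcal{X}$ with $\A(x_a) \neq \A(x_b)$, say $\A(x_a) = a < b = \A(x_b)$.

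\textbf{Step 1 (construction).} Fix any context $c_0$ in the range of $C$, or any context at all, since Definition~\ref{def:leff} indexes $L_{\text{eff}}$ by the full family of contexts. Define $G_1(\cdot \mid c) = \delta_{x_a}$ and $G_2(\cdot \mid c) = \delta_{x_b}$ for every $c$. These are valid generators, being families of conditional distributions over $\mathcal{X}$, and $G_1 \neq G_2$ because $x_a \neq x_b$. The inequality $x_a \neq x_b$ is forced by $\A(x_a) \neq \A(x_b)$.

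\textbf{Step 2 (pushforward computation).} By Definition~\ref{def:leff}, $L_{\text{eff}}(G_i, \A)(c_0) = \A_* G_i(\cdot \mid c_0)$. For the Dirac construction this gives $\delta_a$ for $G_1$ and $\delta_b$ for $G_2$. Since $a \neq b$, these score distributions differ, for instance on the event $\{s \le a\}$. Hence the two maps $c \mapsto \A_* G_i(\cdot \mid c)$ differ, which is the claim. Measurability is immediate: $\A$ is assumed measurable as an assessor (Appendix~\ref{app:assumptions}), and pushforwards of Dirac measures are Dirac at the image.

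\textbf{Step 3 (strengthening toward ``structurally different'').} If I want more than a pointwise difference, matching the narrative's claim about basins, I would instead take non-degenerate mixtures. For example, set $G_1(\cdot \mid c) = p\,\delta_{x_a} + (1-p)\,\delta_{x_b}$ and $G_2(\cdot \mid c) = q\,\delta_{x_a} + (1-q)\,\delta_{x_b}$ with $p \neq q$. Their pushforwards then differ in improvement probability above the threshold $w = a$. This links to the upper-tail separation used in Theorem~\ref{thm:nonmarkov} and shows the difference survives at every context. I could also let the dependence on $c$ differ, so that the two landscapes have different maximizing contexts.

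\textbf{Main obstacle.} The mathematics is trivial; the real obstacle is interpretive. I must make sure that ``generator'' in the paper admits such constructed distributions and is not restricted to, say, actual LLMs. I would handle this by noting that the theorem quantifies over the abstract family of Definition~\ref{def:leff}. I would also remark that non-constancy of $\A$ is necessary: if $\A$ were constant, every pushforward would be the same Dirac measure and all $L_{\text{eff}}$ would coincide. This shows (A4) is exactly the right hypothesis.
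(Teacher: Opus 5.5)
Your proof is correct and follows the paper's own argument. The paper likewise uses (A4) to obtain $x,y$ with $\A(x)\neq\A(y)$ and notes that generators concentrating on $x$ versus $y$ at some context have distinct pushforwards; your Dirac construction makes this explicit, and your mixture variant and your observation that (A4) is necessary are correct extras. One small slip: Appendix~\ref{app:assumptions} never assumes $\A$ is measurable, but you do not need that, since $\A_*\delta_{x}=\delta_{\A(x)}$ holds directly for Dirac measures.
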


\begin{proof}
Since $\A$ is not constant, there exist candidates $x, y$ with $\A(x) \neq \A(y)$. Generators that concentrate on $x$ vs.\ $y$ at some context $c$ then produce distinct pushforward score distributions.
\end{proof}

Note, while Theorem~\ref{thm:nonmarkov} requires A3 (upper-tail separation), Theorem~\ref{thm:leff} does not. Theorem~\ref{thm:nonmarkov} shows that landscape differences propagate to observable score dynamics, so it must exclude generators that differ only below the current $s^*_t$ (which would produce identical next-best-score distributions). Theorem~\ref{thm:leff} is only a structural claim about the landscape as an object, not about dynamics at a particular state, and requires only that $\A$ distinguishes some candidates (A4). Conversely, when $\A$ is coarse (many candidates share scores), $L_{\text{eff}}$ differences between generators collapse---the $\A$-limited regime (Table~\ref{tab:regimes}), where $\A$'s lack of discrimination renders generator choice irrelevant.

\textbf{Ensemble effective landscape.}
\label{cor:ensemble}
When $G = \sum_{i=1}^{k} w_i G_i$ is a flat mixture (fixed routing weights), the score distribution decomposes by linearity of pushforward: $\A_* G(\cdot \mid c) = \sum_{i=1}^{k} w_i \cdot \A_* G_i(\cdot \mid c)$.
A generator faces a \textbf{barrier} at context $c$ when its improvement probability vanishes: $\A_* G(\cdot \mid c)((s^*, \infty)) = 0$. The landscape may contain higher scores, but $G$ cannot reach them from $c$.
An ensemble's reachable set is the union of each component's, so ensembles can escape barriers that trap single generators. Ensemble benefit depends on $L_{\text{eff}}$ diversity, not ensemble size alone---adding $G_{k+1}$ dilutes existing weights ($\sum w_i = 1$), and helps only if it reaches score regions not already covered; consistent with \citet{cheng2025letbarbariansin}'s finding that ensembles beyond two models provided no additional benefit.
For verifying or routed generators (where component selection depends on candidate quality), the linearity decomposition does not necessarily hold. Our experiments suggest barrier escape extends to these systems, though the internal mechanism differs from flat-mixture union (Section~\ref{sec:empirical-variance}).

\subsection{Ceilings and regime classification}
\label{sec:ceilings}

\begin{definition}[Ceilings]
\label{def:ceilings}
The \textbf{generator ceiling} is $s^*_\infty(G, \A) = \sup\{s : \exists\, c \text{ s.t.\ } \A_* G(\cdot \mid c)([s, \infty)) > 0\}$---the supremum over all contexts $c$, including those that only the best mechanism with unlimited budget could construct. It is the intrinsic capability of $G$ under $\A$: an upper bound that no choice of $\mathcal{M}$ or $B$ can exceed. The \textbf{system ceiling} $s^*_\infty(G, \A, \mathcal{M}) \leq s^*_\infty(G, \A)$ restricts to contexts constructable by a specific $\mathcal{M}$ from the evolving history.
\end{definition}

Neither ceiling is directly observable; in practice, proximity is inferred from cross-configuration comparisons (Section~\ref{sec:empirical}): if varying $\mathcal{M}$ does not change scores, the system is likely near the generator ceiling.
These ceilings, together with the assessor's signal structure, induce a regime classification (Table~\ref{tab:regimes}). In practice, a system may exhibit features of more than one regime simultaneously, but identifying even a single binding constraint narrows the space of useful interventions.
\begin{table}[t]
\centering
\small
\begin{tabular}{@{}lp{5.0cm}p{4.6cm}@{}}
\toprule
\textbf{Regime} & \textbf{Definition} & \textbf{Implication} \\
\midrule
Generator-limited & $s^*_\infty(G, \A) < s_{\text{target}}$ & No choice of $\mathcal{M}$ or $B$ can reach the target. Change $G$. \\
$\mathcal{M}$-limited & $s^*_\infty(G, \A) \geq s_{\text{target}}$ but $s^*_\infty(G, \A, \mathcal{M}) < s_{\text{target}}$ & $G$ can produce better candidates but $\mathcal{M}$ cannot construct the contexts to elicit them. Change $\mathcal{M}$. \\
Budget-limited & $s^*_\infty(G, \A, \mathcal{M}) \geq s_{\text{target}}$ but $B$ insufficient & System would improve with more iterations. Increase $B$. \\
$\A$-limited & $G$ produces quality-varying candidates but $\A$ maps them to the same score & Optimization signal is too sparse to guide search. Enrich $\A$'s feedback structure. \\
Saturated & $s^* \approx s^*_\infty(G, \A, \mathcal{M})$ & Near-optimal under current configuration. Further runs have diminishing returns. \\
\bottomrule
\end{tabular}
\caption{Regime classification. Each regime identifies a binding constraint and intervention. Empirical illustrations appear in Section~\ref{sec:empirical}.}
\label{tab:regimes}
\end{table}
\vspace{-2pt}
The $\A$-limited regime is qualitatively different: the limitation is prior to the $G$/$\mathcal{M}$/$B$ hierarchy---no mechanism can navigate a landscape where $\A$ provides no optimization signal. This can arise even when $\A$ is \emph{correct} (it scores valid solutions accurately) but its feedback structure is too coarse---e.g., cliff scoring that collapses all invalid candidates to zero regardless of proximity to validity. Every call in this regime produces zero usable signal; since generation is expensive (each candidate requires at least one LLM call), the waste compounds. Distinguishing ``$\A$ provides insufficient signal'' from ``$G$ is producing uniformly poor candidates'' requires inspecting the candidates themselves (Section~\ref{sec:results-p11}).


\subsection{Static vs.\ adaptive generators}
\label{sec:adaptive-g}
An adaptive generator ($G$) learns or changes its approach during the course of a run.
A static generator navigates $L_{\text{eff}}$; an adaptive $G_t$ simultaneously navigates and reshapes it. 
The structural results (Theorem~\ref{thm:leff}, ceilings, regimes) apply to adaptive $G$ at each instant $t$ without modification. 
The non-Markov property (Theorem~\ref{thm:nonmarkov}) is strengthened: the generator's internal state $\theta_t$ is an additional source of path-dependence beyond the history. 
Ceilings become time-varying---$s^*_\infty(G_t, \A)$ is itself a process whose trajectory characterizes how fast the generator is learning. 
Adaptation is not automatically beneficial: a generator that overfits to early candidates may narrow its reachable set, decreasing $s^*_\infty(G_t, \A)$. 
The framework provides the quantities ($L_{\text{eff},t}$, ceiling trajectory) needed to measure whether adaptation helps, without assuming that it does.

\section{Empirical validation}
\label{sec:empirical}

\label{sec:experimental-setup}

\textbf{Benchmark and problems.}
We select three NP-hard problems for diversity in landscape structure and assessor signal (Table~\ref{tab:problems}) from Frontier-CS, an open-source competitive programming benchmark~\citep{mang2026frontiercs}.

\begin{table}[ht]
\centering
\footnotesize
\begin{tabular}{@{}llll@{}}
\toprule
\textbf{Problem} & \textbf{Task} & \textbf{$L$ (landscape)} & \textbf{$\A$ (scoring)} \\
\midrule
P0 & Polyomino packing (70 cases) & No known ceiling; local optima & Continuous ($\propto$ density) \\
P1 & Bounded 2D knapsack (3 cases) & Near-optimal reference known & Normalized; saturable at~$100$ \\
P11 & Palindrome Ham.\ path (3 cases) & Conjunctive hard constraints & Cliff: $0$ unless all hold \\
\bottomrule
\end{tabular}
\caption{Problems used in experiments (all NP-hard).}
\label{tab:problems}
\end{table}
\vspace{-6pt}

\textbf{Generators.}
We use 12 generators spanning 3 architectural categories (Table~\ref{tab:generators}).
Static generators have a fixed $G$ throughout the run, each inducing a time-invariant $L_{\text{eff}}$ (Theorem~\ref{thm:leff}). Network-of-networks (NoN) \citep{Davis2024} combine multiple models with internal verifiers, composing reachable sets across $L_{\text{eff}}$ (Section~\ref{sec:leff}). Static NoN have fixed components and routing; adaptive NoN reshape their generation distributions across the run, making $L_{\text{eff},t}$ non-stationary (Section~\ref{sec:adaptive-g}). The eb1 variants are closed-source NoN systems \citep{Davis2024} not yet publicly released.

\begin{table}[ht]
\centering
\footnotesize
\begin{tabular}{@{}llll@{}}
\toprule
\textbf{Category} & \textbf{Generator} & \textbf{Type} & \textbf{$L_{\text{eff}}$} \\
\midrule
Static (single model) (7) & \opus{} \citep{anthropic2023claude} & Commercial & Stationary \\
 & \gemini{} \citep{google2023gemini} & Commercial & Stationary \\
 & \gptfive{} \citep{openai2018gpt} & Commercial & Stationary \\
 & \gptmini{} \citep{openai2018gpt} & Commercial & Stationary \\
 & \gptoss{} \citep{openai2018gpt} & Open-weight, MoE & Stationary \\
 & \kimi{} \citep{moonshot2026kimi} & Open-weight, MoE & Stationary \\
 & \qwen{}\textsuperscript{\dag} \citep{qwen2024} & Open-weight & Stationary \\
\midrule
Static NoN (2) & \ebone{}, \ebonepro{}\textsuperscript{\dag} & NoN, fixed & Stationary \\
\midrule
Adaptive NoN (3) & \ebonepreview{} & NoN, adaptive & Non-stationary \\
 & \ebonedelta{} & NoN, adaptive & Non-stationary \\
 & \ebonefrontier{} & NoN, adaptive & Non-stationary \\
\bottomrule
\end{tabular}
\caption{Generators used in experiments. \textsuperscript{\dag}Partial problem coverage due to availability/latency.}
\label{tab:generators}
\end{table}
\vspace{-6pt}

\textbf{Discovery mechanisms.}
In order to vary $\mathcal{M}$ and $G$, we use SkyDiscover \citep{skydiscover2026} with 3 mechanisms, from a greedy baseline to state-of-the-art adaptive search.
Best-of-N (BoN), our greedy baseline (Appendix~\ref{app:mechanism-details}) carries no adaptive state, but is not stateless: its context depends on the growing history (Theorem~\ref{thm:nonmarkov}), so generator differences under BoN reflect $L_{\text{eff}}$ directly.
AdaEvolve \citep{cemri2026adaevolve} is an adaptive multi-island evolutionary search with $33{+}$ tunable parameters in SkyDiscover, including stagnation detection and island migration.
EvoX \citep{liu2026evox} additionally adapts the search strategy itself across iterations via co-evolutionary meta-search.

\textbf{Budget and replication.}
Each run uses $B = 60$ iterations. Convergence guarantees have not yet been established for ADRS (Section~\ref{sec:framework}), so we use a fixed budget to normalize comparisons across configurations while keeping total cost tractable; runs that repeatedly reach saturation (e.g., P1 score~$100$) are terminated early. We target $\geq 5$ independent replications per configuration, with additional runs for multimodal distributions until each detected basin contains $\geq 3$ observations (Appendix~\ref{app:replication}).

\subsection{Results on polyomino packing (Problem~0)}
\label{sec:empirical-variance}

\begin{figure}[!t]
\centering
\includegraphics[width=\linewidth]{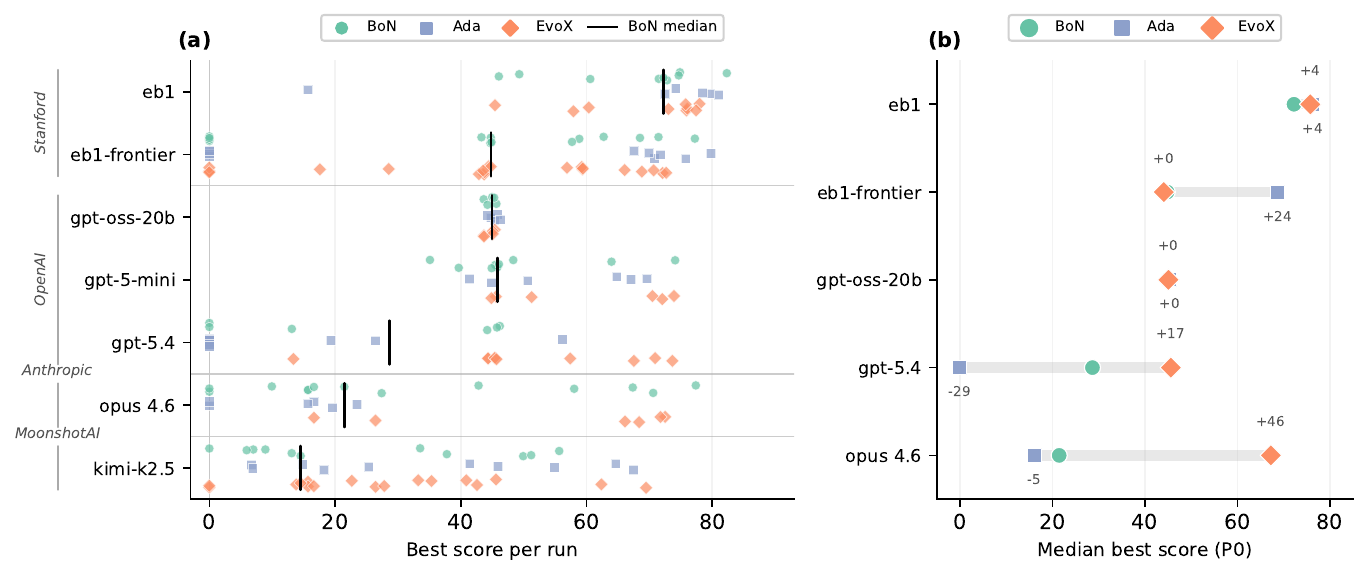}
\caption{Problem~0 results. (a)~Score distributions across generators and mechanisms. Each point is one run's best score; the black line marks the BoN median (greedy baseline). \ebonefrontier{} shown as representative of the dynamic eb1 variants (all three variants in Appendix Figure~\ref{fig:p0-basins-all}). \gemini{} and \qwen{} omitted (rare/no breakthroughs). (b)~Mechanism contribution relative to BoN median. Annotations show the score difference each adaptive mechanism achieves. The effect is non-monotonic: adaptive search can hurt (\opus{} \ada{}~$-5$, \gptfive{} \ada{}~$-29$). \gptmini{} and \kimi{} omitted (full version in Appendix Figure~\ref{fig:p0-m-contribution-all}).}
\label{fig:p0-basins}
\end{figure}
\vspace{-8pt}

Problem~0 is a polyomino packing instance: pack up to $10^4$ polyominoes of size 1--10 into a minimum-area axis-aligned rectangle (70~test cases, NP-hard). The assessor \citep{mang2026frontiercs} scores continuously, proportional to packing density.
Results across 12~generators from 6~model families and 3~mechanisms are summarized in Figure~\ref{fig:p0-basins} (representative subset; full set in Appendix Figure~\ref{fig:p0-basins-all}). We use BoN as a baseline because it isolates generator effects: with no adaptive state, score differences under BoN reflect $L_{\text{eff}}$ directly.

Generator scores span a wide range: the best (\ebone{}) reaches~82.3, while the weakest (\qwen{}) never exceeds~3.07, and \gemini{} rarely breaks through (4/20~runs), though its rare successes reach scores above~66 (Appendix Figure~\ref{fig:p0-basins-all}).
The ordering does not follow general capability rankings: \opus{} (BoN median~21.5) ranks below \gptmini{} (45.8) and \gptoss{} (45.0).
No generator exceeds a score of~82 regardless of mechanism. The \ebone{} family (Networks-of-Networks with internal verification, Table~\ref{tab:generators}) illustrates the complexity: \ebone{} base reaches~82 under \bon{} alone, yet variants adapting dynamically cluster near~44.
Whether a configuration ever exceeds score~0---an event we call \emph{breakthrough}---varies sharply by generator: \gptoss{} and \gptmini{} break through on every run, while \qwen{}, \gemini{}, and \gptfive{} frequently score~0.

For several generators, final scores cluster at discrete levels: most clearly \gptoss{} at~44 and the \ebone{} variants' shared~44 attractor (Figure~\ref{fig:p0-basins}); others show wider distributions consistent with multiple partially-resolved basins.
The \ebone{} variants (preview, frontier, delta) all cluster near~44 under \evox{}, and frontier/delta under \bon{} as well, a shared attractor across related but distinct generators and mechanisms suggesting this basin is a feature of the problem landscape.
Under AdaEvolve, \ebonepreview{} and \ebonefrontier{} shift to medians of 68--71, indicating that \ada{} can escape the~44 basin. The mechanism ranking reverses within the family: for \ebonepreview{}, EvoX \emph{loses}~$12$ points relative to BoN (Appendix Figure~\ref{fig:p0-m-contribution-all}), the opposite of the \ada{} benefit seen in \ebonefrontier{} ($+24$).
In contrast, \gptoss{} converges to~44 across all three mechanisms (CV~$\approx 2\%$), exhibiting a single accessible basin regardless of~$\mathcal{M}$.
\kimi{} shows the widest spread (0--69 under EvoX), consistent with multiple accessible basins.

Figure~\ref{fig:p0-basins}b quantifies each mechanism's contribution relative to BoN.
If mechanism adaptivity predicted performance, we would expect a consistent ordering BoN~$<$~AdaEvolve~$<$~EvoX; instead, no mechanism dominates across all generators.
For \ebone{}, both \ada{} and \evox{} improve on \bon{} by only~$+4$; mechanism choice is almost irrelevant.
For \opus{}, EvoX adds~$+46$ but AdaEvolve \emph{loses}~$5$ points relative to BoN: same generator, opposite mechanism effects.
\gptfive{} under AdaEvolve scores~0 on 8/11~runs (median~0 vs BoN median~28.7): guided search actively lowers performance for this pairing.
The breakthrough asymmetry reinforces this: \gptfive{} breaks through on 27\% of AdaEvolve runs but 100\% of EvoX runs---same $G$, $\A$, $L$, different $\mathcal{M}$, completely different breakthrough behavior.
In general, generators with lower BoN medians benefit more from guided search, but the relationship is non-monotonic and mechanism-specific.
No configuration reaches score~$100$ within $60$ iterations.

\subsection{Results on bounded knapsack (Problem~1)}
\label{sec:results-p1}

Problem~1 is a bounded 2D knapsack instance: select quantities of 12~item types to maximize total value subject to joint mass and volume constraints. Candidate solutions are evaluated on 3~test cases with continuous scoring relative to a known-optimal reference solution \citep{mang2026frontiercs}.
Most generator--mechanism combinations reach the optimum (score~$100$), but iterations-to-saturation and reliability vary significantly.

\begin{figure}[!t]
\centering
\includegraphics[width=\linewidth]{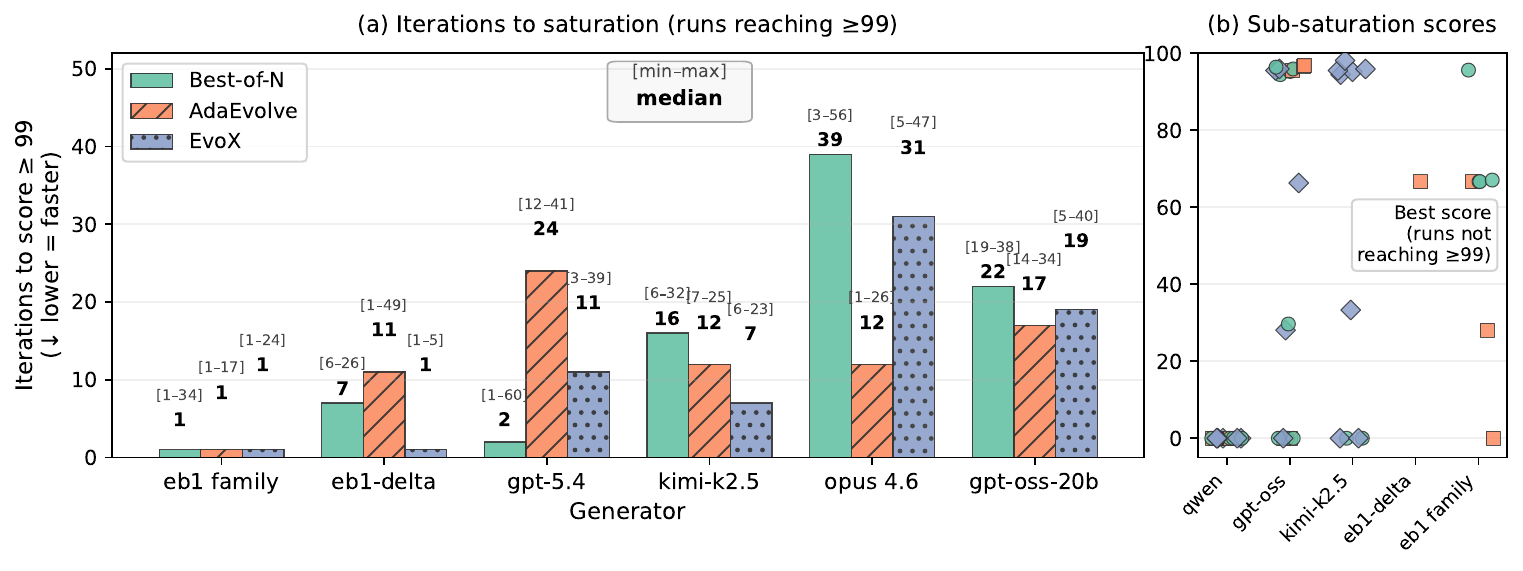}
\caption{Problem~1 search efficiency and reliability. (a)~Median iterations to saturation ($\text{score} \geq 99$) by generator and mechanism; brackets: $[\min, \max]$ over saturating runs. (b)~Best scores for non-saturating runs, colored by mechanism. The eb1 family groups eb1, \ebonepro{}, \ebonepreview{}, and \ebonefrontier{}. Full set including \gptmini{} and \gemini{} in Appendix Figure~\ref{fig:p1-iterations-all}.}
\label{fig:p1-iterations}
\end{figure}

Figure~\ref{fig:p1-iterations}a shows median iterations to saturation (score $\ge99$) across 6 representative generator classes and 3 mechanisms (full set including \gptmini{} and \gemini{} in Appendix Figure~\ref{fig:p1-iterations-all}).
The \textit{eb1 family} (\ebone{}, \ebonepro{}, \ebonepreview{}, and \ebonefrontier{}) are most efficient, all saturating at a median of $1$ iteration across all mechanisms.
Among static generators, the convergence rate spans nearly an order of magnitude: \kimi{} reaches saturation in $7$--$17$ iterations (median, depending on mechanism), while \gptoss{} ($17$--$22$) and \opus{} ($12$--$39$) form a slower tier.
\gptfive{} spans the full range ($2$--$24$) depending on mechanism, the widest spread among static generators.
This hierarchy is not predicted by general model capability rankings: \opus{} is the slowest static generator to saturate despite being one of the most capable on standard coding benchmarks at the time of result collection.

\label{sec:empirical-interaction}
The mechanism ranking is not consistent across generators.
\opus{} shows strong $\mathcal{M}$-sensitivity: AdaEvolve reaches saturation in a median of $12$ iterations versus $39$ for Best-of-N and $31$ for EvoX---a $3\times$ reduction in iterations from mechanism choice.
In contrast, \gptoss{} is $\mathcal{M}$-insensitive, saturating in $17$--$22$ iterations regardless of mechanism.
The mechanism ranking \emph{reverses} between generators: \ebonedelta{} saturates in $1$ iteration with EvoX but $11$ with AdaEvolve; \gptfive{} saturates in $2$ with Best-of-N but $24$ with AdaEvolve, the opposite of \opus{}'s ordering in both cases.
There is no universally best $\mathcal{M}$; the optimal pairing depends on the specific $G \times \mathcal{M}$ combination.

Among runs that do not reach saturation (Figure~\ref{fig:p1-iterations}b), scores cluster at $\approx 66.7$. Within the eb1 family, all sub-saturation runs come from the adaptive variants (preview, frontier, delta); static eb1 and eb1-pro always saturate, suggesting that adaptive $L_{\text{eff},t}$ reshaping occasionally steers into worse regions on this problem. Breakthrough rates vary by $G \times \mathcal{M}$: \gptoss{} achieves $69$--$92\%$ (depending on $\mathcal{M}$), \gemini{} $40$--$80\%$, \kimi{} on Best-of-N $71\%$, and \qwen{} $0/16$~runs even with the most adaptive mechanism.

\subsection{Results on palindrome path (Problem~11)}
\label{sec:results-p11}

Problem~11 requires finding a minimum-length palindromic move sequence that visits every blank cell of an $n \times m$ grid (up to $30 \times 30$, 3~test cases). Across ${>}16\text{K}$~iterations ($286$~runs, $22$~configurations spanning 10~generators from Table~\ref{tab:generators} and all 3~mechanisms), every run scores~0 (Figure~\ref{fig:p1-p11-matrix}).
This is the cleanest regime example in our study: unlike P0 (where breakthrough depends on $G \times \mathcal{M}$) or P1 (where breakthrough is stochastic but common), P11 exhibits universal failure regardless of configuration.

The universal failures do not reflect generators ignoring the problem. Across all runs, ${>}120\text{K}$ candidate programs were generated and evaluated. These candidates are substantive (median 220~lines, 93\% implementing BFS/DFS-based graph traversal) and often demonstrate correct mathematical reasoning about the palindrome constraint, but no candidate satisfies all constraints simultaneously.
The assessor \citep{mang2026frontiercs} scores via $\text{Clamp}(1 - (\ell - \ell^*)/\ell^*, \; 0, \; 1)$ where $\ell$ is the path length and $\ell^*$ a reference bound. However, this formula is only reached for \emph{valid} palindrome paths visiting all cells. Any candidate that fails validity scores~0 outright, with no partial credit for near-valid solutions.
This cliff structure means $\mathcal{M}$ receives no gradient signal: a candidate covering 90\% of cells scores identically to one covering 0\%. All ${>}120\text{K}$ candidates map to the same score, so no mechanism---regardless of sophistication---can distinguish improving from worsening candidates.
Formally, P11 satisfies Theorems~\ref{thm:nonmarkov} and~\ref{thm:leff} vacuously: no candidate scores above~0, so A3's antecedent is never met and the theorems make no predictions.
However, the framework's regime classification still provides actionable diagnosis independent of these guarantees. P11 is optimization-signal-limited under the current assessor: the assessor is \emph{correct} (it would score a valid solution accurately) but its cliff structure collapses all invalid candidates to a single score, flattening $L_{\text{eff}}$ so that no $\mathcal{M}$ can extract a gradient. This is diagnosed from the decomposition $L_{\text{eff}} = \A \circ G$: generators produce structurally relevant candidates (above), so the binding constraint is not $G$'s capability but $\A$'s feedback granularity. Enriching $\A$ to provide partial credit (e.g., for cell coverage or palindrome prefix length) is the necessary first step.

\section{Discussion}
\label{sec:discussion}

\begin{table}[!ht]
\centering
\footnotesize
\begin{tabular}{@{}lccp{5.5cm}@{}}
\toprule
\textbf{Aspect} & \textbf{Theory} & \textbf{Empirical} & \textbf{Evidence} \\
\midrule
Non-Markov property of $\{s^*_t\}$ & \checkmark & & Thm~\ref{thm:nonmarkov}; P0 replicated runs diverge from identical initial conditions (\S\ref{sec:empirical-variance}) \\
$L_{\text{eff}}$ depends on $G$ & \checkmark & & Thm~\ref{thm:leff}; P0 generator sensitivity (\S\ref{sec:empirical-variance}) \\
Ensemble barrier escape & \checkmark & \checkmark & \S\ref{sec:leff}; P0 eb1 variants (\S\ref{sec:empirical-variance}) \\
Regime taxonomy & \checkmark & & \S\ref{sec:ceilings}; P1 per-config (\S\ref{sec:results-p1}), P11 (\S\ref{sec:results-p11}) \\
Location of regime boundaries & & \checkmark & P1 breakthrough thresholds (\S\ref{sec:results-p1}) \\
Per-basin breakthrough rates & & \checkmark & P0 per-basin rates (\S\ref{sec:empirical-variance}) \\
Generator capability threshold & & \checkmark & P11 universal zero (\S\ref{sec:results-p11}) \\
Run-to-run variance & & \checkmark & P0 multimodal distributions (\S\ref{sec:empirical-variance}) \\
\bottomrule
\end{tabular}
\caption{Theory--empirics mapping with evidence.}
\label{tab:scope}
\end{table}
\vspace{-6pt}

The effective landscape unifies several empirical phenomena observed across our results and concurrent work.
\textbf{Generator sensitivity} (Sections~\ref{sec:empirical-variance}--\ref{sec:results-p1}): different generators \emph{can} induce structurally different $L_{\text{eff}}$ (Theorem~\ref{thm:leff}), and our data bear this out. Capability leaderboards do not predict ADRS performance; on P0, \gptoss{} (20B open-weight MoE) reaches median 45.0 versus 21.5 for \opus{}, and the ranking reverses across mechanisms. Model selection here behaves more like a topology question than a ranking question.
\textbf{$G \times \mathcal{M}$ interaction}: $\mathcal{M}$ navigates $L_{\text{eff}}(G, \A)$, so a mechanism suited to one landscape topology may be unsuited to another. This interaction is itself problem-dependent: \gptoss{} is $\mathcal{M}$-insensitive on P0 but $\mathcal{M}$-responsive on P1.
\textbf{Run-to-run variance}: can reflect basin structure when $L_{\text{eff}}$ has multiple accessible regions. Because $C$ samples from the history, same-region entries accumulate, producing self-reinforcing contexts, unless $\mathcal{M}$ actively injects diversity. Low variance can instead be an artifact of a coarse $\A$ that maps diverse candidates to the same score (cf.\ P11).
\textbf{Regime diversity}: even within a single problem, configurations occupy different regimes. On P1, \qwen{} is hard G-limited ($0/16$ breakthrough), \gptoss{} intermittently G-limited, and \ebone{} saturated.

Because these components are mutually conditioned (each component's contribution depends on what the others provide), effective improvement is best targeted by first diagnosing which component is binding rather than investing in any one by default.
The framework supports this diagnosis by connecting each design decision to a specific theoretical quantity ($L_{\text{eff}}$ geometry, ceilings, regime boundaries) that can be estimated with targeted evaluations rather than exhaustive ablation.
On P0, rich assessor signal enables $\mathcal{M}$ to exploit landscape structure; on P1, $\mathcal{M}$ affects efficiency but $G$ determines breakthrough; on P11, no $\mathcal{M}$ can help under the current cliff-scoring assessor, which collapses all invalid candidates to zero so no search strategy can extract a gradient.
The diagnosis points to action: enriching $\A$'s signal (e.g., partial credit for cell coverage or palindrome prefix length) could in principle restore optimization signal; whether $\mathcal{M}$ can then navigate, and what $G$ would then need to achieve, is untested.
This co-design perspective builds on independent component advances: mechanism diversity (co-evolutionary meta-search, island models) reveals $G \times \mathcal{M}$ interaction, and trajectory-aware mechanisms matter because the process is non-Markov: a mechanism that exploits history structure (distinct from a generator that adapts its own output distribution) can escape basins that memoryless mechanisms cannot \citep{karimi2026engram}.
Mechanism innovation, which dominates recent work, can therefore face diminishing returns precisely when $G$ or $\A$ is binding, as in both cases above.

Where mechanism innovation has diminishing returns, adaptive generation (internal verification, dynamic routing) and assessor enrichment may offer higher returns on many problems than further mechanism sophistication alone.
The assessor-enrichment side is already visible in public systems that route richer-than-scalar feedback to the generator \citep{agrawal2025gepa}; the adaptive-generation side is harder to study in fully open systems.
Our core findings rest on the 7 publicly available static generators, which already exhibit regime diversity, $G \times \mathcal{M}$ interaction, basin structure, and mechanism reversal; the (closed-source) eb1 family adds a suggestive data point on adaptive generation.
eb1 is $\mathcal{M}$-insensitive across problems, and eb1 base reaches higher basins (${\sim}82$) than its dynamic variants, which additionally reshape generation over a run (path-dependence beyond the history, Theorem~\ref{thm:nonmarkov}), consistent with adaptive generation reducing sensitivity to mechanism choice.
We cannot isolate whether internal verification, scale, or training drives this, and flag it only as a hypothesis for reproducible follow-up; no framework conclusion depends on eb1.

Diagnosis translates into concrete savings in iterations-to-solution.
Even on P1 (a ``solved'' problem), $G$-selection reduces iterations by ${\sim}39\times$ (1 for \ebone{} vs.\ 39 for \opus{}, both under \bon{}) and $\mathcal{M}$-selection by $3\times$ (12 for \ada{} vs.\ 39 for \bon{}, both with \opus{}).
Basin structure also reshapes how results should be summarized: when P0 scores cluster at $44$ and $72$, the mean ($58$) describes no actual run, which reframes the practitioner's question from ``how many runs to estimate a mean?'' to ``how many runs to resolve the mixture?''
The framework applies by construction across $(G, \A, \mathcal{M}, B, L)$: theory establishes the structures to diagnose (regimes, ceilings); empirics reveal basin structure and which regime a specific configuration falls into (Table~\ref{tab:scope}).
A sharper open question concerns the leverage of the earliest choices: because the process is non-Markov, $D_0$ content (not just $s^*_0$) can shape the trajectory (Theorem~\ref{thm:nonmarkov}), and the multimodal final-score distributions we observe from identical configurations (Section~\ref{sec:empirical-variance}) suggest that variation early in a run can commit it to a basin.
If so, inexpensive interventions on the initial history (seed selection, warm-starting) could rival the choice of generator or mechanism in impact, a possibility the non-Markov view makes precise and our replications motivate testing.

\paragraph{Limitations.}
The empirical validation uses a competitive programming benchmark; the theoretical results are architecture-general, but the specific phenomena could in principle be domain-specific. Independent results across math, systems, and science domains suggest they are not, though direct validation on additional domains would strengthen the case.
Search efficiency (iterations-to-saturation) is not computational efficiency: NoN generators perform opaque internal computation per API call, so one iteration for a verified generator may involve much more compute than for a single model. Our comparisons measure mechanism-level attempts needed, not cost per attempt.
Regime classifications are inferred from observed score distributions under finite replication; low-probability events (e.g., rare basin access or infrequent breakthrough) may remain undetected at the sample sizes used. The empirical results establish existence of the reported phenomena, not exhaustive characterization of the underlying landscapes.
Characterizing when $L_{\text{eff}}$ admits multiple separated score regions---contexts $c_1, c_2$ where $\A_* G(\cdot \mid c_i)$ concentrate on different score ranges with no improvement-probability path between them---is a topological question about the effective landscape that remains open.

\section{Conclusion}
\label{sec:conclusion}

We prove that the ADRS best-score process is non-Markov and that its full state is growing-dimensional (Theorem~\ref{thm:nonmarkov}); convergence guarantees based on fixed-dimensional Markov state do not directly apply. \gamble{} takes the first steps in bridging this gap: its effective landscape $L_{\text{eff}} = \A \circ G$ explains why generator sensitivity and $G \times \mathcal{M}$ interaction arise structurally, and its regime classification identifies the binding constraint ($G$, $\A$, $\mathcal{M}$, or $B$) with targeted evaluations rather than exhaustive ablation. Validation across $760{+}$ runs (${>}46{,}000$ iterations), 12~generators from 6~model families, 3~mechanisms, and 3~NP-hard problems exposes basin structure, interaction effects, and regime diversity.

The framework makes ADRS optimization systematic: each design decision maps to a specific theoretical quantity ($L_{\text{eff}}$ geometry, ceilings, regime boundaries) that researchers can estimate empirically. Because components are mutually conditioned---$\mathcal{M}$'s value depends on the $L_{\text{eff}}$ that $G$ and $\A$ jointly determine---effective improvement requires diagnosing which component is binding before investing in any one. When $\A$ is binding, no amount of $G$ or $\mathcal{M}$ spend helps; when a configuration is saturated, further iterations are wasted. As ADRS scale to longer runs and broader deployment, the non-Markov characterization and effective landscape geometry established here lay the groundwork for making ADRS efficient, supporting future work on autotuning, formal diagnostics, and scaling under resource constraints.

\section*{Acknowledgments}

\ifarxiv
We thank Jared Quincy Davis for early access to eb1, insightful feedback, and the whole eb1 model development team for their ongoing support.
\fi

\bibliography{paper1}
\bibliographystyle{plainnat}


\appendix
\section*{Appendix}

\noindent\textbf{Outline.}
\vspace{-4pt}
\begin{itemize}\setlength{\itemsep}{0pt}\setlength{\parskip}{0pt}
\item[\S A.] Notation reference
\item[\S B.] Assumptions
\item[\S C.] Why standard analytical tools do not apply
\item[\S D.] Alternative formulations considered
\item[\S E.] Relationship to Cheng et al.'s ADRS decomposition
\item[\S F.] Generation-verification structure
\item[\S G.] Methodological ancestors
\item[\S H.] Mechanism implementation details
\item[\S I.] Replication design
\item[\S J.] Persistence of initial conditions
\item[\S K.] Supplementary figures
\item[\S L.] Extended comparison with concurrent work
\end{itemize}

\section{Notation reference}
\label{app:notation}

\begin{table}[h]
\centering
\begin{tabular}{ll}
\toprule
\textbf{Symbol} & \textbf{Meaning} \\
\midrule
$G$ & Generator (single model, ensemble, or compound system) \\
$G(\cdot \mid c)$ & Generation distribution conditioned on context $c$ \\
$\A$ & Assessor, $\A: \mathcal{X} \to \mathbb{R}$ \\
$s = \A(x)$ & Score of candidate $x$ \\
$\mathcal{M}$ & Discovery mechanism + configuration \\
$L$ & Problem landscape (candidate space, structure, constraints) \\
$\mathcal{X}$ & Candidate space \\
$D_t$ & Run history through step $t$ (theorems use $(x_i, s_i)$ projection) \\
$\mathcal{M}_t$ & Mechanism state at step $t$ \\
$C$ & Context construction: $c_t = C(D_t, \mathcal{M}_t)$ \\
$U$ & Update function \\
$s^*_t$ & Best score at step $t$: $\max_{i \leq t} s_i$ \\
$L_{\text{eff}}(G, \A)$ & Effective landscape: $\A \circ G$ \\
$\A_*$ & Pushforward: $\A_* P$ is the score distribution induced by $\A$ on samples from $P$ \\
$s^*_\infty(G, \A)$ & Generator ceiling \\
$s^*_\infty(G, \A, \mathcal{M})$ & System ceiling \\
$B$ & Computational budget \\
\bottomrule
\end{tabular}
\caption{Notation reference.}
\label{tab:notation}
\end{table}

\section{Assumptions}
\label{app:assumptions}

The theoretical results in this paper depend on the following assumptions. We state them explicitly for auditability; each theorem references which assumptions it requires.

\noindent\textbf{A1. Faithful context construction.} Context construction uses history content beyond the current best score: there exist $D_t \neq D_t'$ with the same best score ($\max_{i \leq t} s_i = \max_{i \leq t} s_i'$) such that $C(D_t, \mathcal{M}_t) \neq C(D_t', \mathcal{M}_t)$ with positive probability. This excludes mechanisms that condition only on $s^*_t$ or ignore the history entirely. \emph{Used in:} Theorem~\ref{thm:nonmarkov}; persistence remark (Appendix~\ref{app:persistence}).

\noindent\textbf{A2. Context-dependent generator.} The generator's output distribution depends on the context: $c \neq c' \Rightarrow G(\cdot \mid c) \neq G(\cdot \mid c')$ for all $c, c'$ in the range of $C$. For LLM-based generators receiving contexts from $C$, this is empirically natural: contexts differ in substantive content (code, scores, run history), not merely in paraphrasing or irrelevant tokens, so distinct contexts produce distinct output distributions. \emph{Used in:} Theorem~\ref{thm:nonmarkov}; persistence remark (Appendix~\ref{app:persistence}).

\noindent\textbf{A3. Upper-tail-separating assessor.} For any threshold $w$ and any distinct generation distributions $G(\cdot \mid c) \neq G(\cdot \mid c')$ that both admit improvement above $w$ (i.e., $P(\A(x) > w \mid c) > 0$ and $P(\A(x) > w \mid c') > 0$), the distributions of $\max(w,\, \A(x))$ under $G(\cdot \mid c)$ and $G(\cdot \mid c')$ are not identical.

This is the load-bearing assumption for Theorem~\ref{thm:nonmarkov}. It is strictly about the \emph{upper tail}: two generation distributions that differ only below $w$ produce identical distributions of $s^*_{t+1} = \max(w, s_{t+1})$, so below-threshold differences cannot violate the Markov property at best score $w$. A3 requires that when contexts differ and improvement is possible from both, the distributions of the next best score differ---either through different improvement probabilities, or through different conditional distributions above $w$. The quantifier is restricted to the generation family $\{G(\cdot \mid c)\}_c$; we do not require this for arbitrary distributions over candidates.

For LLM-based generators, A3 is empirically natural: a prompt containing high-scoring code produces a different probability of \emph{exceeding} the current best than a prompt containing low-scoring code---the generation distribution doesn't partition into ``above current best'' and ``below'' independently of context.

\emph{Used in:} Theorem~\ref{thm:nonmarkov} (for the non-Markov step). \emph{Vacuous on:} P11's cliff assessor (Section~\ref{sec:results-p11}), where improvement probability is zero for all contexts ($\A_*$ maps every generation distribution to $\delta_0$). A3's antecedent is never satisfied, so the theorem is silent---not because the assumption is violated, but because no context admits improvement for it to distinguish.

\noindent\textbf{A4. Non-constant assessor.} There exist candidates $x, y \in \mathcal{X}$ with $\A(x) \neq \A(y)$. \emph{Used in:} Theorem~\ref{thm:leff}. Strictly weaker than A3.

\noindent\textbf{A5. Budget sufficiency.} The computational budget $B$ is large enough for the mechanism to explore multiple history states. Required for the budget-limited vs.\ saturated regime distinction to be meaningful. \emph{Used in:} regime classification (Table~\ref{tab:regimes}).

\noindent\textbf{A6. Assessor faithfulness.} The assessor $\A$ faithfully represents the optimization objective---higher $\A$-scores correspond to genuinely better candidates. The framework optimizes $\A$; if $\A$ is a misleading proxy, the system converges to optima of the proxy. \emph{Used in:} all interpretive claims that connect scores to candidate quality.

\noindent Assumptions A1--A4 are structural and verifiable from the system's architecture. A3 targets the upper-tail distribution within the generation family and directly delivers what the non-Markov proof requires. A5 is a design assumption about experimental adequacy. A6 is an implicit modeling assumption common to all score-based optimization.

\section{Why standard analytical tools do not apply}
\label{app:standard-tools}

Many standard quantitative convergence tools rely on structural assumptions that are often violated in ADRS. Classical rate guarantees for gradient methods assume a fixed objective and stable update operator \citep{nesterov2018lectures}; classical regret bounds for bandits assume a fixed or well-structured action set \citep{lai1985asymptotically, slivkins2019introduction}; EA runtime analyses typically assume fixed representations with stationary mutation kernels \citep{neumann2010bioinspired}.

The growing history can be embedded in an infinite-dimensional product space where the full-state process is Markov---and indeed time-homogeneous if the mechanism state is included---but the resulting representation grows with $t$ and does not, in general, admit tractable forms of the structural properties (e.g., contraction, ergodicity, compactness) on which standard analyses rely \citep{levin2017markov}. History-dependent context construction and adaptive mechanisms mean that existing analytical tools do not directly yield nontrivial quantitative bounds on convergence rates in this setting. The fundamental difficulty is that natural scalar summaries such as $s^*_t$ are not, in general, sufficient statistics for the process (Theorem~\ref{thm:nonmarkov}): convergence behavior depends on the full history, not the current best score alone.

\section{Inference-time scaling as the non-adaptive limit}
\label{app:its}

\begin{corollary}[Inference-time scaling as the non-adaptive limit]
\label{cor:its}
If context construction is history-independent ($C(D_t, \mathcal{M}_t) = c_0$ for all $t$---a fixed context, which in particular violates A1), then $x_1, x_2, \dots$ are i.i.d.\ draws from $G(\cdot \mid c_0)$, the best-score process $\{s^*_t\}$ is Markov, and the probability of exceeding a score $w$ within $N$ samples is
\[
\Pr(s^*_N > w) = 1 - (1 - p_w)^N, \qquad p_w := \Pr\big(\A(x) > w \mid c_0\big),
\]
with $p_w$ the per-sample success probability.
\end{corollary}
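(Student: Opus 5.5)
The plan is to show that history-independent context collapses the process to i.i.d.\ sampling, and then read off both claims from that. First I establish the i.i.d.\ structure by induction on $t$ using Definition~\ref{def:trajectory}. Since $c_t = C(D_t,\mathcal{M}_t) = c_0$ for every realization of $(D_t,\mathcal{M}_t)$, step (2) gives $x_{t+1}\sim G(\cdot\mid c_0)$ conditionally on the entire past $(D_t,\mathcal{M}_t)$. The conditional law therefore does not depend on the past, so $x_{t+1}$ is independent of $(x_1,\dots,x_t)$ and has law $G(\cdot\mid c_0)$. Here I should note a modeling assumption: the generator's sampling randomness is fresh at each step, meaning $G$ is a fixed kernel applied with independent randomness. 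This is implicit in the notation $x_{t+1}\sim G(\cdot\mid c_t)$. The mechanism update $U$ may still evolve $\mathcal{M}_t$, but it cannot influence $x_{t+1}$ because $C$ ignores it. Since $\A$ is a deterministic map, the scores $s_i=\A(x_i)$ are i.i.d.\ with law $\A_*G(\cdot\mid c_0)$.

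Next I show that $\{s^*_t\}$ is Markov. Write $s^*_{t+1}=\max(s^*_t, s_{t+1})$, where $s_{t+1}$ is independent of $(s^*_1,\dots,s^*_t)$, since those are functions of $x_1,\dots,x_t$. For any Borel set $A$, the standard "function of current state plus independent innovation" lemma then gives
\[
\Pr(s^*_{t+1}\in A\mid s^*_1,\dots,s^*_t)=\Pr(\max(s^*_t,s_{t+1})\in A\mid s^*_t)=\kappa(s^*_t,A).
\]
Here $\kappa(w,A)=\Pr(\max(w,\A(x))\in A)$ with $x\sim G(\cdot\mid c_0)$, so the process is even time-homogeneous. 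This is the exact failure point of the non-Markov argument in Theorem~\ref{thm:nonmarkov}: A1 fails, so all histories with the same $s^*_t$ induce the same next-step law.

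Finally I compute the coverage law. The event $\{s^*_N>w\}$ is the complement of $\bigcap_{i\le N}\{s_i\le w\}$. By independence and identical distribution, the probability of that intersection is $(1-p_w)^N$ with $p_w=\Pr(\A(x)>w\mid c_0)$, which gives the formula.

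The only delicate step is the first one. I need to justify rigorously that conditioning on a constant context yields genuine independence, rather than merely identical marginals. I would handle this with the tower property: for bounded measurable $f,g$,
\[
\mathbb{E}[f(x_{t+1})g(x_{1:t})]=\mathbb{E}\bigl[g(x_{1:t})\,\mathbb{E}[f(x_{t+1})\mid D_t,\mathcal{M}_t]\bigr]=\Bigl(\int f\,dG(\cdot\mid c_0)\Bigr)\,\mathbb{E}[g(x_{1:t})].
\]
Iterating this identity over $t$ gives the joint product law.
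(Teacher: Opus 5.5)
Your proposal is correct and takes the same approach as the paper. History-independence of $C$ makes the draws i.i.d.\ from $G(\cdot \mid c_0)$, so $\{s^*_t\}$ is the running maximum of i.i.d.\ scores and hence Markov, and the coverage law follows because $s^*_N > w$ fails exactly when all $N$ draws score at most $w$. Your tower-property argument and your explicit assumption of fresh generation randomness at each step make rigorous what the paper's one-line proof takes for granted.
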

\begin{proof}
History-independence makes each draw independent of the past, so $\{s^*_t\}$ is the running maximum of i.i.d.\ draws---Markov, with transition depending only on the current value; $s^*_N > w$ fails iff all $N$ draws score $\le w$.
\end{proof}

\section{Alternative formulations considered}
\label{app:alternatives}

\paragraph{True landscape $L^*$.} We considered introducing $L^*: \mathcal{X} \to \mathbb{R}$ representing actual candidate quality, distinct from $\A$. Rejected: for computationally hard problems $\A = L^*$; for engineering problems $L^*$ is ill-defined. $L^*$ does no analytical work---all results operate on $\A$. We note $\A$-faithfulness as an implicit assumption: the assessor $\A$ is taken as ground truth for what constitutes improvement.

\paragraph{Prompt construction as separate parameter.} Rejected: examination of four discovery mechanisms shows prompt construction is tightly coupled to mechanism architecture. Separating them creates a parameter that cannot vary independently in practice.

\paragraph{Assessor embedded in landscape.} Initial formulation used $(G, \mathcal{M}, L)$ with $\A \subset L$. Revised to $(G, \A, \mathcal{M}, L)$: $\A$ is a design choice, and separating it enables assessor-limited diagnosis.

\section{Relationship to Cheng et al.'s ADRS decomposition}
\label{app:cheng-mapping}

\citet{cheng2025barbariansgateaiupending} define five architectural components of ADRS: Prompt Generator, Solution Generator, Evaluator, Storage, and Solution Selector. Our parameterization maps as follows:

\begin{table}[h]
\centering
\small
\begin{tabular}{@{}llp{6.4cm}@{}}
\toprule
\textbf{Cheng et al.\ component} & \textbf{Our parameter} & \textbf{Notes} \\
\midrule
Solution Generator & $G$ & The language model or model system \\
Evaluator & $\A$ & The scoring/assessment function \\
Storage & $D_t$ (partially) & $D_t$ is the data available to store; what is retained is a design choice within $\mathcal{M}$ \\
Solution Selector & $\mathcal{M}$ (partially) & Parent selection is part of the mechanism \\
Prompt Generator & $\mathcal{M}$ + system design & Context construction $C(D_t, \mathcal{M}_t)$ \\
\bottomrule
\end{tabular}
\caption{Mapping between \citet{cheng2025barbariansgateaiupending}'s five-component decomposition and our parameterization.}
\label{tab:cheng-mapping}
\end{table}

The Prompt Generator is the component least cleanly captured by $(G, \A, \mathcal{M}, L)$. It controls: (a)~problem statement formatting (fixed per problem---part of $L$), (b)~which previous solutions to show (determined by Solution Selector---part of $\mathcal{M}$), and (c)~the prompt template and content: how selected information is assembled into text. We absorb (c) into $C(D_t, \mathcal{M}_t)$, treating it as part of $\mathcal{M}$, because in current ADRS implementations prompt construction is tightly coupled to the discovery mechanism: AdaEvolve's prompts include island-specific context, EvoX's prompts include strategy performance history, and GEPA's prompts include rejection feedback. Changing $\mathcal{M}$ already changes prompt generation.

\section{Generation-verification structure}
\label{app:genverif}

Generators may be compound AI systems whose internal composition determines the structure of $L_{\text{eff}}$. Table~\ref{tab:generator-hierarchy} shows the composition hierarchy; each level builds on the previous.

\begin{table}[h]
\centering
\begin{tabular}{llll}
\toprule
\textbf{Level} & \textbf{Structure} & \textbf{Example} & \textbf{Effect on $L_{\text{eff}}$} \\
\midrule
Single & $G$ & One LLM & Base landscape \\
Ensemble & $\sum w_i G_i$ & Two-model mix & Smoothed barriers \\
Verified & $V \circ (G_1, \ldots, G_K)$ & Verifier judging $K$ models & Smoothed + filtered \\
\bottomrule
\end{tabular}
\caption{Composition hierarchy for generators.}
\label{tab:generator-hierarchy}
\end{table}

Ensemble composition smooths $L_{\text{eff}}$ by mixing reachable sets (Section~\ref{sec:leff}). Verified composition additionally filters: internal verifiers preferentially pass high-quality candidates before they reach the outer assessor $\A$. In general, composition nests recursively---generators contain verifiers that contain generators---producing arbitrarily deep filtering hierarchies \citep{Davis2024}. A verified generator may be \emph{static} (non-adaptive components and routing) or \emph{adaptive} (components or routing evolve across iterations, with or without a learning conductor). Our experiments span both: eb1 base is a static NoN, while eb1-preview, eb1-frontier-preview, and eb1-delta-preview are adaptive NoN whose generation distributions shift over a run (Table~\ref{tab:generators}).

\section{Methodological ancestors}
\label{app:ancestors}

AutoML systems such as auto-sklearn \citep{feurer2015autosklearn} operate over large, structured configuration spaces whose practical tractability depends on limiting effective parameter interactions---surrogate models work best when effective dimensionality is low and high-order interactions are limited. ADRS components interact strongly: $G \times \mathcal{M}$ interactions are pervasive in our data, and the binding constraint shifts across problems, so which component to optimize depends on the full $(G, \A, \mathcal{M})$ configuration. The configuration space is not a numeric hypercube but a choice among qualitatively different systems.

Fitness landscape theory in evolutionary computation \citep{kauffman1993origins} is the closest intellectual ancestor. Kauffman's NK model takes a fixed local mutation neighborhood as given; later formalizations (e.g., Stadler's $\langle$space, neighborhood, fitness$\rangle$ definition) make landscape topology explicitly dependent on the mutation operator, rendering it solver-configuration-dependent. $L_{\text{eff}}$ extends this lineage: the ``operator'' is now a generative system comprising at least one LLM, whose output distribution is context-dependent, stochastic, and potentially adaptive, making the landscape both generator-dependent and, for adaptive generators, time-varying (Section~\ref{sec:adaptive-g}).

\section{Mechanism implementation details}
\label{app:mechanism-details}

All mechanisms are implemented in SkyDiscover \citep{skydiscover2026} (commit \texttt{48bf7ae}). All generation hyperparameters (temperature, max\_tokens, top-p) use SkyDiscover's defaults at this commit unless otherwise stated. We document the precise context construction for each mechanism, since the information shown to the generator determines whether the process satisfies the faithfulness condition of Theorem~\ref{thm:nonmarkov}.

\paragraph{Best-of-N (BoN).}
BoN uses greedy parent selection with history-dependent context:
\begin{enumerate}
\item \textbf{Parent selection}: the highest-scoring program in the history (ties broken by insertion order). The same parent is reused for $N{=}5$ consecutive iterations before re-selection ($N_{\text{reuse}}{=}5$).
\item \textbf{Context programs}: $k_{\text{ctx}}{=}4$ programs sampled uniformly from the history's top-$k_{\text{pool}}{=}10$ by score (excluding the parent). These are non-best programs---the generator sees alternatives, not just the current optimum.
\item \textbf{Previous attempts}: up to $k_{\text{prev}}{=}3$ recent programs (from the last $w{=}100$ iterations), sorted by score, shown with their metrics and whether they improved or regressed relative to their parent.
\item \textbf{Prompt}: the parent's code, score breakdown, and evaluator feedback; context programs' code and scores; previous attempts with outcomes; improvement direction hints (score trend, solution length).
\end{enumerate}
BoN's context construction $C$ is faithful: two histories sharing the same best score but differing in non-maximal entries will produce different context samples and different previous-attempt histories, satisfying the conditions of Theorem~\ref{thm:nonmarkov}. Score ties are common in practice (e.g., $53$ programs sharing score $63.78$ within a single P0 run), so the ``strictly injective assessor'' assumption that would make BoN Markov-reducible does not hold.

BoN carries no adaptive state beyond the history: no population partitions, no stagnation counters, no strategy parameters. This makes it the minimal faithful mechanism---any simpler $C$ (e.g., showing only the best program with no context) would be unfaithful, reducing the process to i.i.d.\ sampling.

\paragraph{AdaEvolve.}
AdaEvolve \citep{cemri2026adaevolve} adds multi-island population structure with $33{+}$ tunable parameters. Its context construction includes island-specific parent selection, stagnation detection that triggers ``meta-guidance'' strategy shifts, and island migration. The mechanism state $\mathcal{M}_t$ includes per-island stagnation counters, migration history, and adapted strategy parameters.

\paragraph{EvoX.}
EvoX \citep{liu2026evox} adapts the search strategy itself via co-evolutionary meta-search. Its context construction includes strategy performance history across iterations and a switch interval ($20$ iterations in our experiments) that triggers strategy re-evaluation. The mechanism state includes the strategy archive and performance statistics.

\section{Replication design}
\label{app:replication}

Every configuration receives $\geq 5$ independent runs. For configurations whose sorted scores show gaps $> 15$ points (indicating distinct performance basins), we add runs until each detected cluster contains $\geq 3$ independent observations.
\section{Persistence of initial conditions}
\label{app:persistence}

\noindent\textbf{Remark (persistence of initial conditions).} Consider an ADRS with faithful $C$ and context-dependent $G$ under \emph{$D_0$-preserving} context construction: $C(D_t, \mathcal{M}_t)$ carries information from $D_0$ for all $t$. Then the generation distribution $G(\cdot \mid c_t)$ depends on $D_0$ for all $t \leq T$. The argument is a one-step induction: $D_0$ determines $c_0 = C(D_0, \mathcal{M}_0)$, so $G(\cdot \mid c_0)$ depends on $D_0$ by A2; and $D_0$-preservation keeps $D_0$ information in $c_t$, which A2 carries into $G(\cdot \mid c_t)$. This is an immediate consequence of context-dependence rather than a separate result; its content is the conditional, isolating \emph{when} initial conditions persist.

The hypothesis is also where it can fail. $D_0$-preservation holds only so long as $D_0$ remains competitive, or $\mathcal{M}$ or $G$ continue surfacing it in context. When the run improves past $D_0$, mechanisms typically replace early entries with higher-scoring candidates, weakening $D_0$'s influence on the context.

\section{Supplementary figures}
\label{app:supplementary-figures}

\begin{figure*}[h]
\centering
\includegraphics[width=\linewidth]{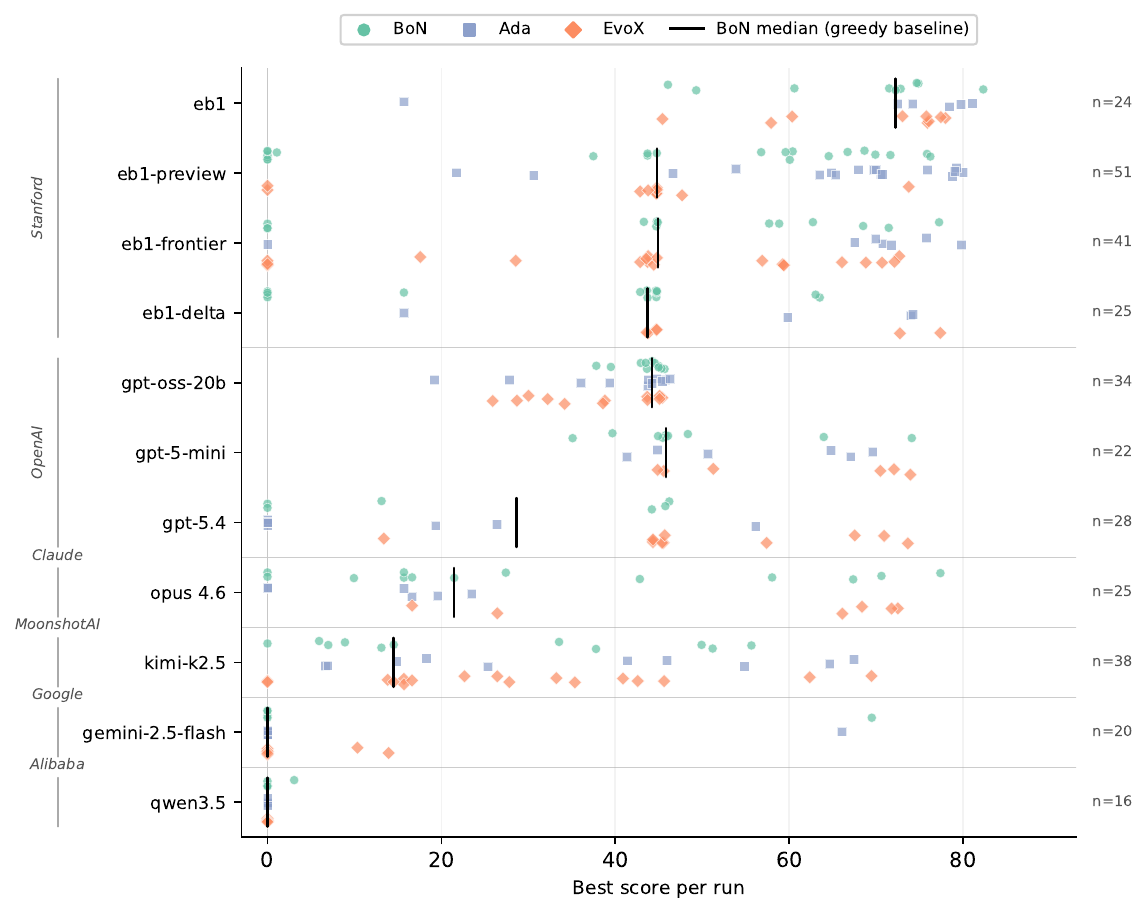}
\caption{P0 score distributions for all 12 generators individually (Figure~\ref{fig:p0-basins}a shows \ebonefrontier{} as representative). All configurations have $n \geq 5$ runs per mechanism; see Appendix~\ref{app:replication} for replication criteria.}
\label{fig:p0-basins-all}
\end{figure*}

\begin{figure}[h]
\centering
\includegraphics[width=0.75\linewidth]{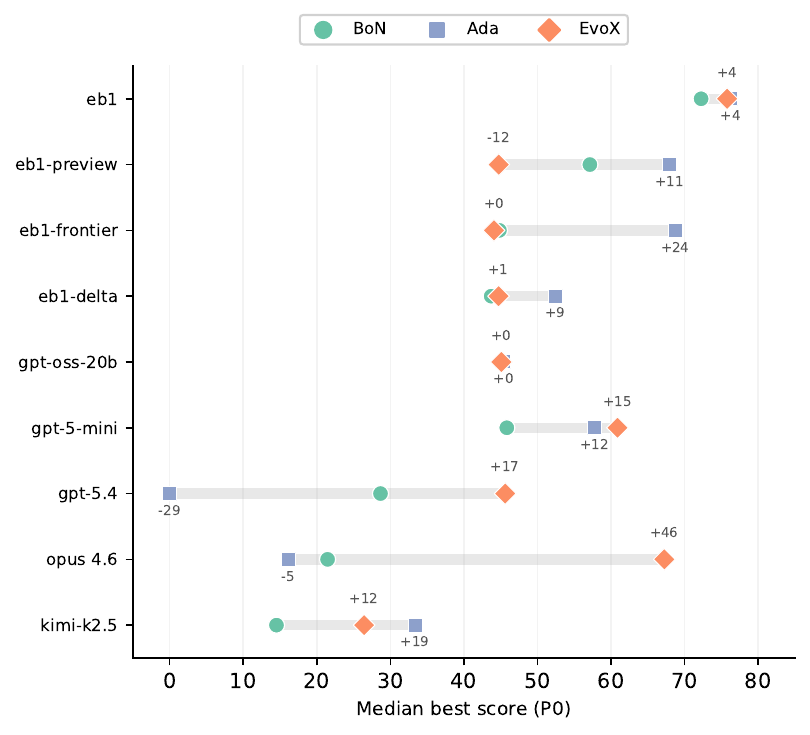}
\caption{Mechanism contribution relative to BoN median on P0 (full version of Figure~\ref{fig:p0-basins}b, with all eb1 variants shown individually). \gemini{} and \qwen{} omitted (median score~0 across all mechanisms).}
\label{fig:p0-m-contribution-all}
\end{figure}

\begin{figure*}[h]
\centering
\includegraphics[width=\linewidth]{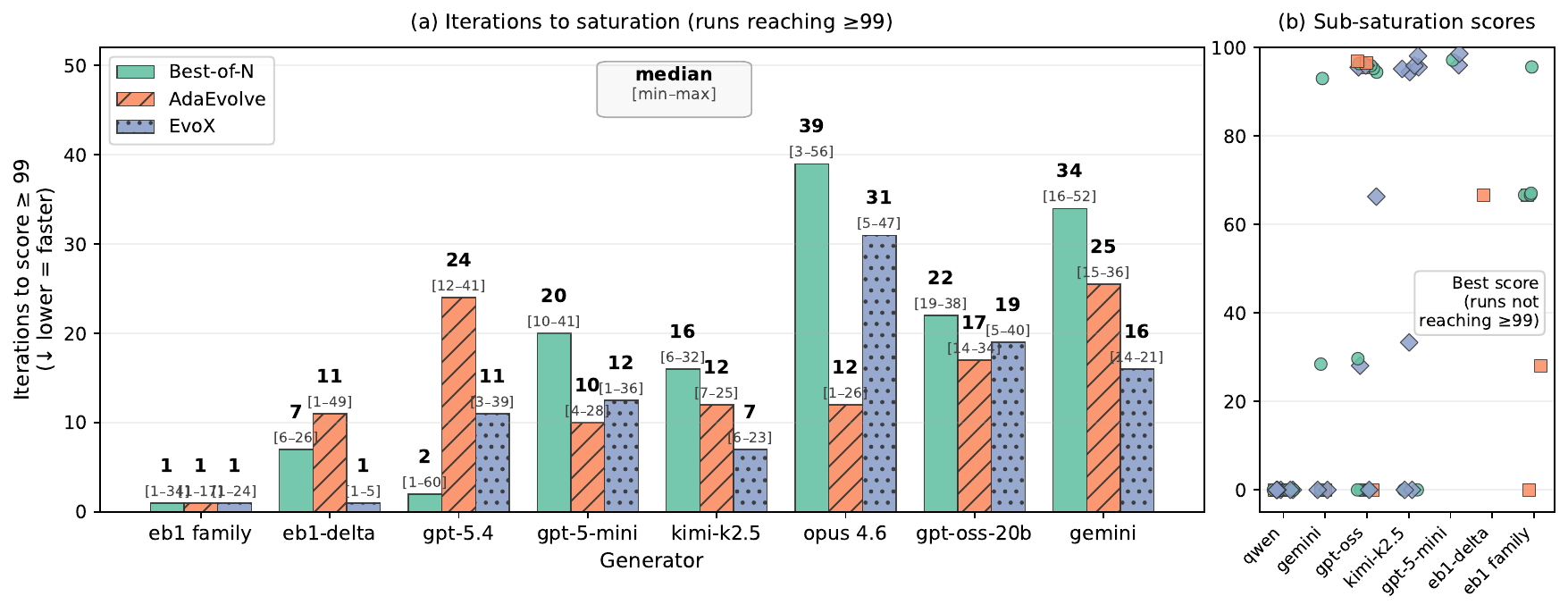}
\caption{P1 iterations to saturation and sub-saturation scores for all generators (full version of Figure~\ref{fig:p1-iterations}). Includes \gptmini{} and \gemini{}, omitted from the main figure for space. \gptmini{} behaves similarly to \kimi{} (fast saturation); \gemini{} is in the slower tier with \gptoss{} and \opus{}.}
\label{fig:p1-iterations-all}
\end{figure*}

\begin{figure*}[h]
\centering
\includegraphics[width=\linewidth]{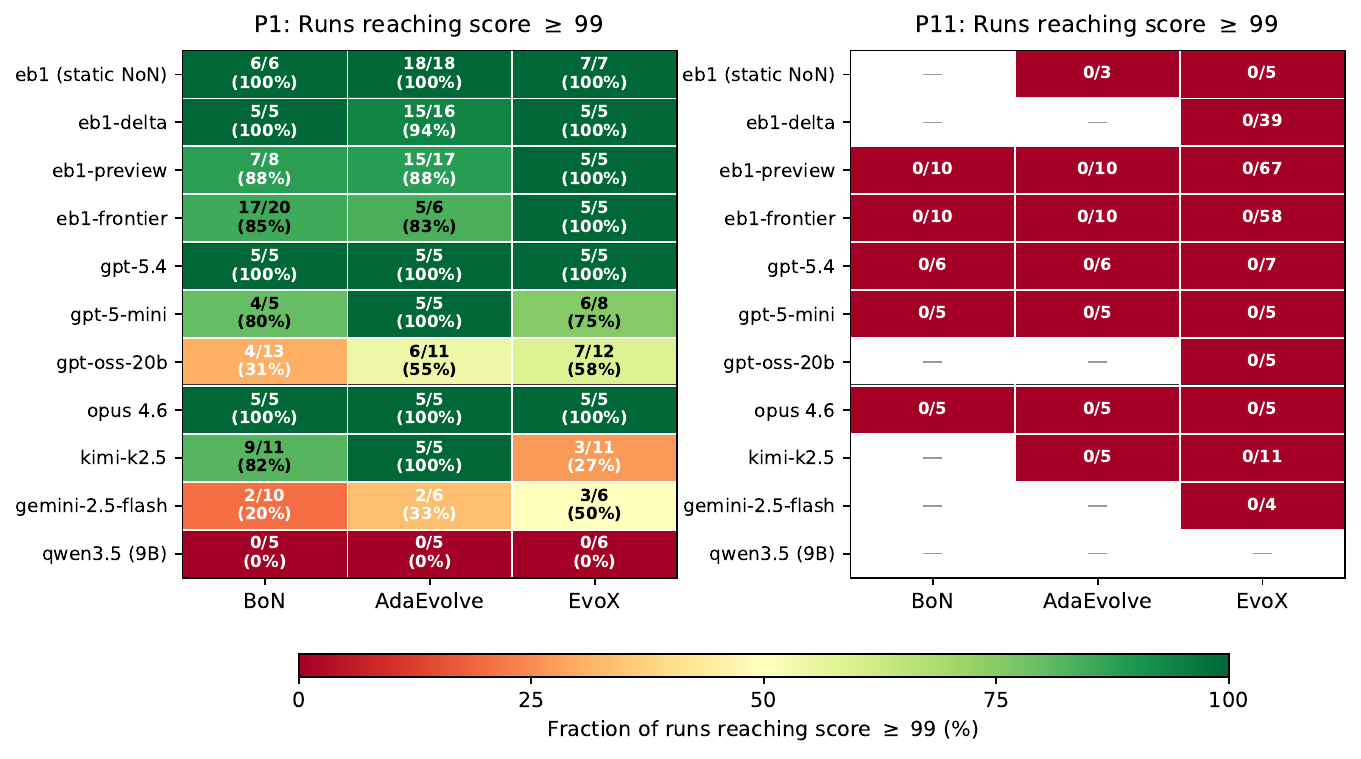}
\caption{$G \times \mathcal{M}$ coverage matrix showing the fraction of runs reaching score~$\geq 99$ on P1 (left) vs.\ P11 (right). Each cell shows runs reaching near-optimal out of total runs. P1 exhibits rich differentiation across generators and mechanisms (Section~\ref{sec:results-p1}); P11 shows universal zero across all 22~tested configurations. Grey cells indicate untested combinations. Generators grouped by family; eb1 variants shown individually (eb1-pro excluded due to availability-related coverage sparsity).}
\label{fig:p1-p11-matrix}
\end{figure*}

\section{Extended comparison with concurrent work}
\label{app:concurrent}

The phenomena we formalize have been observed empirically by several groups working on ADRS during the same period. We provide a detailed comparison here.

\paragraph{Cheng et al.\ (2025).}
\citet{cheng2025letbarbariansin} compare three frameworks (OpenEvolve, GEPA, ShinkaEvolve) across two generators (GPT-5, Gemini-3.0) on ten systems problems and report that ``GEPA performed significantly better with GPT-5, whereas ShinkaEvolve favored Gemini-3.0''---a clear $G \times \mathcal{M}$ interaction across diverse problem domains. Their best practices distill these observations into practitioner heuristics (``select LLM based on desired solution structure,'' ``the solution is only as good as the evaluator'') that correspond to our generator sensitivity and assessor-limited concepts, respectively.

\paragraph{SkyDiscover.}
SkyDiscover \citep{skydiscover2026} extends this to 172 Frontier-CS problems and 14 math/systems tasks, demonstrating both generator sensitivity and mechanism sensitivity at scale, though without run-to-run variance data to assess the significance of individual differences.

\paragraph{EvoX.}
EvoX \citep{liu2026evox} reports mean and best scores over three runs per configuration across $\sim\!200$ tasks, revealing both generator sensitivity (e.g., on Cloudcast, Gemini-3.0-Pro succeeds with random sampling alone while GPT-5 requires an adaptive mechanism---a $G \times \mathcal{M}$ interaction) and mean--best gaps consistent with basin structure. However, EvoX focuses on \emph{within-run} variation: adapting the search strategy across iterations as the landscape changes during a single trajectory. The \emph{across-run} question---why do independent runs of the same configuration produce different outcomes?---is not studied, and the replicated data that could answer it is reported only as a robustness metric.

\paragraph{AdaEvolve.}
AdaEvolve \citep{cemri2026adaevolve} reports mean$\pm$std over three runs per configuration on six math and seven systems tasks, using both GPT-5 and Gemini-3-Pro. Their cross-backbone data reveals $G \times \mathcal{M}$ interactions that the paper itself does not highlight: ShinkaEvolve on Circle Packing scores $2.464 \pm .083$ with GPT-5 but $2.622 \pm .012$ with Gemini, while GEPA on Circle Packing (Rect) \emph{worsens} from $2.326$ to $2.216$ when switching to Gemini. The variance data is also informative: ShinkaEvolve's std of $.083$ on Circle Packing with GPT-5 suggests bimodal outcomes (some runs reaching $2.541$, others not), while AdaEvolve achieves std $= .001$ on the same task. This variance reduction is not accidental---AdaEvolve's Level~3 ``Meta-Guidance'' is explicitly designed to escape stagnation plateaus, which in our terminology are basin commitment events. Their case study confirms this: runs without meta-guidance ``remain stuck near $2.514$,'' while meta-guidance triggers a qualitative strategy shift that escapes to $2.636$. AdaEvolve thus provides engineering validation that basin structure is real and that explicit escape mechanisms can address it, consistent with our theoretical prediction that run-to-run variance reflects basin structure in $L_{\text{eff}}$.

\paragraph{LEVI.}
LEVI \citep{tanveer2026levi} builds a framework around generator--mechanism interaction: cheap models handle most mutations (within-basin refinement in our terminology), while a frontier model is reserved for infrequent ``paradigm shifts'' (cross-basin jumps). LEVI's CVT-MAP-Elites archive, which maintains diversity across both structural and behavioral dimensions, can be understood as a mechanism designed to prevent premature basin commitment. Their controlled comparison---same model, same budget, three seeds---shows that the search architecture matters as much as the generator, consistent with our $G \times \mathcal{M}$ interaction results. LEVI observes run-to-run variance (their shaded confidence bands) but treats it as noise; our framework characterizes its source.

\paragraph{GEPA.}
GEPA \citep{agrawal2025gepa}, a reflective prompt optimizer (Genetic-Pareto), provides evidence that feedback \emph{structure} is a first-class lever. By reflecting on natural-language feedback (execution traces, error diagnostics) rather than the sparse scalar reward used by RL methods such as GRPO, GEPA outperforms GRPO by up to 20\% on individual benchmarks (${\approx}6\%$ on average) with up to $35\times$ fewer rollouts. Consistent with our formalization (Section~\ref{sec:framework}: structured feedback enters through context construction $C(D_t)$, not by extending $\A$'s codomain), this richer-than-scalar feedback reshapes $L_{\text{eff}}$ to be more navigable. GEPA also exhibits $G \times \mathcal{M}$ interaction: their merge mechanism improves performance with GPT-4.1~Mini but \emph{degrades} it with Qwen3~8B.

\paragraph{Cheng et al.\ (2026).}
\citet{cheng2026adrsdatabases} extend ADRS to database optimization and address the $A$-limited regime directly, co-evolving the evaluator alongside solutions. Across three case studies they demonstrate that a misleading $A$ traps evolution in false optima and that improving $A$ unlocks gains no $G$ or $\mathcal{M}$ change could achieve (up to $6.8\times$ latency reduction)---making $A$ adaptive ($L_{\text{eff},t} = A_t \circ G$) even with static $G$.

\paragraph{Glia.}
\citet{hamadanian2026glia} introduce a multi-agent workflow (Researcher + Supervisor) for systems design that reasons at the hypothesis level rather than performing code-level mutation. Compared against EoH, FunSearch, and OpenEvolve on LLM-serving request routing, their Multi-Context variant (MCG, best-of-$N$ parallel reasoning trajectories) outperforms evolutionary methods by $1.3$--$1.7\times$, with diminishing returns beyond $N = 4$. Glia simultaneously changes both generator (compound reasoning agent vs.\ single-model code completion) and mechanism (best-of-$N$ vs.\ island evolution), so the improvement cannot be attributed to either component alone---consistent with performance being a property of the $(G, \A, \mathcal{M})$ triple. Their finding that MCG benefit saturates at small $N$ is consistent with ensemble diminishing returns when $L_{\text{eff}}$ diversity is exhausted. No variance analysis or formal decomposition is provided.

\paragraph{ShinkaEvolve.}
ShinkaEvolve \citep{lange2025shinkaevolve} is an LLM-driven evolutionary program-search framework that maintains a fixed-size archive of evaluated programs with fitness scores, and adds a \emph{meta-scratchpad}: every $T$ generations a meta-agent summarizes recent evaluations into individual program summaries, global insights, and implementation recommendations, appended to the mutation prompt as ``knowledge diffusion.'' In our terminology the meta-scratchpad is distilled context construction $C(D_t)$---a periodically regenerated, lossy textual compression of the run history fed back to $G$---playing the same role as Engram's Research Digest (next) while remaining distinct from the raw program archive. Its strong generator dependence (favoring different backbones than competing frameworks on the same tasks; Cheng et al.\ above) is one of the $G \times \mathcal{M}$ interactions we document.

\paragraph{Engram.}
\citet{karimi2026engram} address what they call the \emph{coherence ceiling}: a single long-running agent's performance degrades as context grows, yet independent runs discard prior insights. Their solution decouples exploration from persistence: a sequence of agents each operates in a fresh context window, reading a structured \emph{Research Digest} summarizing prior agents' findings, and writing results into a persistent \emph{Archive}. On LLM request routing (same benchmark as Glia), multi-cloud multicast, and KV-cache reuse, Engram outperforms Glia, EoH, FunSearch, and OpenEvolve. In our terminology, the coherence ceiling is a practical manifestation of the growing-dimensional state $(D_t, \mathcal{M}_t)$ exceeding what fits in a finite context window; the Digest is a lossy compression of $D_t$ designed to preserve the reasoning behind prior attempts, not just their scores. Their ablation confirms that context construction is a first-class design choice: removing the Digest degrades performance more than removing the raw Archive, suggesting that structured interpretation of history matters more than raw access to it. Engram also tolerates temporary score regression (costs rising from \$772 to \$1104 before recovering to \$644): the Research Digest preserves the reasoning behind the failed attempt, enabling the next agent to persist within a promising algorithmic family rather than retreating to the previous best. Score-only selection would prune the intermediate, making sustained exploration through a temporary regression unlikely without the structured context that history-dependent reasoning provides.

\paragraph{CORAL.}
CORAL \citep{qu2026coral} is a framework for autonomous multi-agent evolution on open-ended problems: long-running agents explore, reflect, and collaborate through a shared persistent memory of attempts, notes, and reusable skills, with asynchronous execution and heartbeat-based interventions (per-iteration reflection, periodic consolidation, and stagnation-triggered redirection). On mathematical and systems-optimization tasks it reports new state-of-the-art results on 8 of 11 problems, and an ablation on three stress-test tasks shows that disabling note and skill creation degrades the final score on each---evidence that knowledge artifacts causally contribute to search quality. In our framework $G$ is a multi-agent autonomous system, the stagnation-triggered redirection is an explicit basin-escape mechanism in $\mathcal{M}$, and the notes/skills repository is collaborative context construction $C(D_t)$ layered over a shared run history. Its distillation of notes and skills on top of the raw attempt log places it toward the curated end of the spectrum below.

\paragraph{Meta-Harness.}
Meta-Harness \citep{metaharness2026} optimizes the task-specific \emph{harness}---the program wrapping a fixed base model---rather than searching for task solutions directly: an agentic proposer is given unrestricted filesystem access to the source code, scores, and execution traces of every previously evaluated harness, maintaining a population and Pareto frontier with no parent-selection rule. It deliberately forgoes curated archives and persistent-memory summaries in favor of raw access to the full history, on the rationale that the proposer improves automatically as coding agents become more capable. In our terms it is the clearest instance of the run history $D_t$ used as the interface directly---the append-only record exposed to $G$, with the outer-loop mechanism held deliberately minimal so that selection reduces to the proposer's free navigation of $D_t$. It marks the raw end of the raw--distilled spectrum for context construction (Table~\ref{tab:memory-spectrum}): where Engram compresses $D_t$ into a Digest, Meta-Harness leaves $D_t$ uncompressed and relocates any distillation into the proposer's in-context reasoning.

\begin{table}[t]
\centering
\small
\caption{How concurrent ADRS systems carry information across iterations, organized along a raw--distilled spectrum of context construction. Each maps to a move within $(G, \A, \mathcal{M})$ or to context construction $C(D_t)$ over the run history $D_t$. ``Distilled?'' indicates whether information is compressed/interpreted before re-entering the generator's context.}
\label{tab:memory-spectrum}
\begin{tabular}{@{}lp{4.0cm}cp{3.4cm}@{}}
\toprule
\textbf{System} & \textbf{Information carried forward} & \textbf{Distilled?} & \textbf{\gamble{} locus} \\
\midrule
FunSearch / AlphaEvolve & program + score archive & no & $\mathcal{M}$ (island / MAP-Elites selection) \\
ShinkaEvolve & archive + meta-scratchpad (summaries, insights) & partial & $C(D_t)$ distillation \\
AdaEvolve & archive + meta-guidance & partial & $\mathcal{M}$ escape $+\, C(D_t)$ \\
Engram & persistent Archive + Research Digest & yes & $C(D_t)$ across fresh contexts \\
CORAL & shared attempts / notes / skills memory & yes & multi-agent $G + C(D_t)$ \\
GEPA & Pareto frontier + reflective NL feedback (traces, diagnostics) & yes (NL) & $C(D_t)$ reflection; rich $\A$ feedback \\
Meta-Harness & full raw history (code + scores + traces) & no (by design) & raw $D_t$ as direct interface \\
\bottomrule
\end{tabular}
\end{table}

\paragraph{Bilevel Autoresearch.}
\citet{qu2026bilevel} meta-optimize the search mechanism itself: an outer loop analyzes the inner autoresearch loop's trace, generates new Python mechanism code (Tabu Search, Multi-Armed Bandit, Orthogonal Exploration), and injects it at runtime. On a GPT pretraining task, mechanism replacement (their Level~2) produces $5\times$ improvement over the inner loop alone, while parameter-level adjustment yields essentially zero gain---an extreme instance of $\mathcal{M}$-limitation where the generator can produce better solutions but the fixed mechanism cannot navigate to them. They observe that the inner loop without intervention exhibits ``near-deterministic repetition'' (the LLM proposes the same changes every iteration), which in our terminology is a barrier in $L_{\text{eff}}$: the generator's prior biases create deterministic trajectories that the mechanism must break. Their high run-to-run variance ($\pm 0.030$, $67\%$ of absolute mean at $n = 3$) further confirms that single runs are unreliable estimators.

\paragraph{Summary.}
These works collectively demonstrate that generator sensitivity, $G \times \mathcal{M}$ interaction, and assessor-dependent performance are robust empirical phenomena spanning competitive programming, math, systems optimization, and database domains. All report variance data but treat it as noise or a robustness metric; none provides a theoretical account of why these phenomena arise or how to diagnose the limiting factor in a given configuration with targeted evaluations rather than exhaustive ablation. The \gamble{} framework takes the first steps to fill this gap.

\ifarxiv\else
  \section*{NeurIPS Paper Checklist}

%
%

\begin{enumerate}

\item {\bf Claims}
    \item[] Question: Do the main claims made in the abstract and introduction accurately reflect the paper's contributions and scope?
    \item[] Answer: \answerYes{}
    \item[] Justification: All claims in the abstract and introduction match the numbers reported in Section~\ref{sec:empirical}; theoretical claims match Section~\ref{sec:framework}.

\item {\bf Limitations}
    \item[] Question: Does the paper discuss the limitations of the work performed by the authors?
    \item[] Answer: \answerYes{}
    \item[] Justification: Section~\ref{sec:discussion} discusses limitations.

\item {\bf Theory assumptions and proofs}
    \item[] Question: For each theoretical result, does the paper provide the full set of assumptions and a complete (and correct) proof?
    \item[] Answer: \answerYes{}
    \item[] Justification: Theorems~\ref{thm:nonmarkov} and~\ref{thm:leff} state assumptions explicitly; complete proofs appear in the main text. Appendix~\ref{app:assumptions} collects all assumptions with expanded justifications.

    \item {\bf Experimental result reproducibility}
    \item[] Question: Does the paper fully disclose all the information needed to reproduce the main experimental results of the paper to the extent that it affects the main claims and/or conclusions of the paper (regardless of whether the code and data are provided or not)?
    \item[] Answer: \answerYes{}
    \item[] Justification: Section~\ref{sec:empirical} specifies all configurations: benchmark, generators (Table~\ref{tab:generators}), mechanisms, iteration counts, and replication policy.

\item {\bf Open access to data and code}
    \item[] Question: Does the paper provide open access to the data and code, with sufficient instructions to faithfully reproduce the main experimental results, as described in supplemental material?
    \item[] Answer: \answerNo{}
    \item[] Justification: The benchmark (Frontier-CS) and search mechanisms (SkyDiscover) are already open-source. Our experiment orchestration scripts can be recreated from the text description and will be released upon acceptance.

\item {\bf Experimental setting/details}
    \item[] Question: Does the paper specify all the training and test details (e.g., data splits, hyperparameters, how they were chosen, type of optimizer) necessary to understand the results?
    \item[] Answer: \answerYes{}
    \item[] Justification: Section~\ref{sec:empirical} and Appendix~\ref{app:mechanism-details} specify all configurations. No training; experiments are search runs with fixed hyperparameters.

\item {\bf Experiment statistical significance}
    \item[] Question: Does the paper report error bars suitably and correctly defined or other appropriate information about the statistical significance of the experiments?
    \item[] Answer: \answerYes{}
    \item[] Justification: Section~\ref{sec:empirical} reports medians, ranges, and individual run scores. Replication design and bootstrap stopping criteria are detailed in Appendix~\ref{app:replication}.

\item {\bf Experiments compute resources}
    \item[] Question: For each experiment, does the paper provide sufficient information on the computer resources (type of compute workers, memory, time of execution) needed to reproduce the experiments?
    \item[] Answer: \answerYes{}
    \item[] Justification: All computation is LLM API calls; no local GPU training or significant local compute. Execution time is dominated by API latency and is proportional to iteration count (60 per run). Section~\ref{sec:experimental-setup} specifies configurations.

\item {\bf Code of ethics}
    \item[] Question: Does the research conducted in the paper conform, in every respect, with the NeurIPS Code of Ethics \url{https://neurips.cc/public/EthicsGuidelines}?
    \item[] Answer: \answerYes{}
    \item[] Justification: The research involves no human subjects, no private data, and no dual-use concerns beyond general AI capabilities research.

\item {\bf Broader impacts}
    \item[] Question: Does the paper discuss both potential positive societal impacts and negative societal impacts of the work performed?
    \item[] Answer: \answerNA{}
    \item[] Justification: The paper provides analytical tools for understanding existing systems; it does not introduce new capabilities, models, or datasets.

\item {\bf Safeguards}
    \item[] Question: Does the paper describe safeguards that have been put in place for responsible release of data or models that have a high risk for misuse (e.g., pre-trained language models, image generators, or scraped datasets)?
    \item[] Answer: \answerNA{}
    \item[] Justification: No models, datasets, or tools with misuse potential are released.

\item {\bf Licenses for existing assets}
    \item[] Question: Are the creators or original owners of assets (e.g., code, data, models), used in the paper, properly credited and are the license and terms of use explicitly mentioned and properly respected?
    \item[] Answer: \answerYes{}
    \item[] Justification: All models, the benchmark, and mechanism implementations are cited with their original papers in Section~\ref{sec:experimental-setup} and Table~\ref{tab:generators}.

\item {\bf New assets}
    \item[] Question: Are new assets introduced in the paper well documented and is the documentation provided alongside the assets?
    \item[] Answer: \answerNA{}
    \item[] Justification: No new assets are released with this submission. Code and data release is planned for camera-ready.

\item {\bf Crowdsourcing and research with human subjects}
    \item[] Question: For crowdsourcing experiments and research with human subjects, does the paper include the full text of instructions given to participants and screenshots, if applicable, as well as details about compensation (if any)?
    \item[] Answer: \answerNA{}
    \item[] Justification: No human subjects or crowdsourcing involved.

\item {\bf Institutional review board (IRB) approvals or equivalent for research with human subjects}
    \item[] Question: Does the paper describe potential risks incurred by study participants, whether such risks were disclosed to the subjects, and whether Institutional Review Board (IRB) approvals (or an equivalent approval/review based on the requirements of your country or institution) were obtained?
    \item[] Answer: \answerNA{}
    \item[] Justification: No human subjects research.

\item {\bf Declaration of LLM usage}
    \item[] Question: Does the paper describe the usage of LLMs if it is an important, original, or non-standard component of the core methods in this research? Note that if the LLM is used only for writing, editing, or formatting purposes and does \emph{not} impact the core methodology, scientific rigor, or originality of the research, declaration is not required.
    \item[] Answer: \answerNA{}
    \item[] Justification: LLMs are experimental subjects, not a component of our method. Their role as generators is described in Section~\ref{sec:empirical}.

\end{enumerate}

\fi

\end{document}